\newcommand{\Xa}{\mathcal{X}}
\newcommand{\Pa}{\mathcal{P}}
\newcommand{\Za}{\mathcal{Z}}
\newcommand{\Wa}{{\bf WALDO\;}}
\DeclareMathOperator{\E}{\mathbb{E}}
\newtheorem{example}{Example}
\newcommand{

\begin{algorithm*}[h]

\SetAlgoLined
\SetKwInOut{Input}{Input}

\Input{positive data $X^{p}$, unlabeled test data $X^{u}$}

 Initialize the parameters of the encoder $Q_{\phi}$, inlier decoder $G^{i}_{\theta}$, discriminator $D_{\gamma}$. Set the outlier decoder $G^{o}_{\omega}$ parameter $\omega=\theta$.
 
 \While{$\phi,\theta,\omega$ not converged}{
 
  Sample  batch of size $n$  from 
  $X^{p}$ positive  and $X^{u}$ unlabeled data, $|X^{p} \cup X^{u}| = n$

  Sample $\{z_{1},\dots,z_{n}\}$ from the prior $P_{Z}$\\
  Sample $\{\hat{z}_{1},\dots,\hat{z}_{n}\}$ from $Q_{\phi}(x_{i})$ for $i = 1,\dots,n$
 
  \BlankLine

  Update $D_{\gamma}$ by ascending:
  \[
 \frac{\lambda}{n} \sum_{j=1}^{n} \log D_{\gamma}(z_{j}) + \log(1- D_{\gamma}(\hat{z}_{j}))
  \]
Compute advantage of inlier decoder:
 \[
 adv^{i} = \min_{\forall x_i \in X^{u}} \Vert G^{o}_{\theta}(\hat{z}_{j}) -  x_{j} \Vert^{2}_{2}  -  \min_{\forall x_j \in X^{p}} \Vert G^{i}_{\theta}(\hat{z}_{j}) -  x_{j} \Vert^{2}_{2}
 \]

\For{$j = 1,\dots,n$}{%
    \eIf{$\Vert G^{i}_{\theta}(\hat{z}_{j}) -  x_{j} \Vert^{2}_{2} + adv^{i} < \Vert G^{o}_{\omega}(\hat{z}_{j}) -  x_{j} \Vert^{2}_{2} \vee x_{j} \in X^{p}$}{
    $y_{j} = 0$
    }{
    $y_{j} = 1$
}
   }

Update $Q_{\phi}$, $G^{i}_{\theta}$ and $G^{o}_{\omega}$ by descending:
\[
\frac{1}{n} \sum_{j=1}^{n} ( y_{j} \Vert G^{o}_{\omega}(\hat{z}_{j}) -  x_{i} \Vert^{2}_{2} + (1-y) \Vert G^{i}_{\theta}(\hat{z}_{i}) -  x_{j} \Vert^{2}_{2}) 
- \lambda \cdot( \log(D_{\gamma}(\hat{z}_{j}) ) )
\]

 }
 \caption{Wasserstein Autoencoder for Learning Distribution of Outliers (WALDO)}
 \label{alg:waldo}
\end{algorithm*}}{

\begin{algorithm*}[h]

\SetAlgoLined
\SetKwInOut{Input}{Input}

\Input{positive data $X^{p}$, unlabeled test data $X^{u}$}

 Initialize the parameters of the encoder $Q_{\phi}$, inlier decoder $G^{i}_{\theta}$, discriminator $D_{\gamma}$. Set the outlier decoder $G^{o}_{\omega}$ parameter $\omega=\theta$.
 
 \While{$\phi,\theta,\omega$ not converged}{
 
  Sample  batch of size $n$  from 
  $X^{p}$ positive  and $X^{u}$ unlabeled data, $|X^{p} \cup X^{u}| = n$

  Sample $\{z_{1},\dots,z_{n}\}$ from the prior $P_{Z}$\\
  Sample $\{\hat{z}_{1},\dots,\hat{z}_{n}\}$ from $Q_{\phi}(x_{i})$ for $i = 1,\dots,n$
 
  \BlankLine

  Update $D_{\gamma}$ by ascending:
  \[
 \frac{\lambda}{n} \sum_{j=1}^{n} \log D_{\gamma}(z_{j}) + \log(1- D_{\gamma}(\hat{z}_{j}))
  \]
Compute advantage of inlier decoder:
 \[
 adv^{i} = \min_{\forall x_i \in X^{u}} \Vert G^{o}_{\theta}(\hat{z}_{j}) -  x_{j} \Vert^{2}_{2}  -  \min_{\forall x_j \in X^{p}} \Vert G^{i}_{\theta}(\hat{z}_{j}) -  x_{j} \Vert^{2}_{2}
 \]

\For{$j = 1,\dots,n$}{%
    \eIf{$\Vert G^{i}_{\theta}(\hat{z}_{j}) -  x_{j} \Vert^{2}_{2} + adv^{i} < \Vert G^{o}_{\omega}(\hat{z}_{j}) -  x_{j} \Vert^{2}_{2} \vee x_{j} \in X^{p}$}{
    $y_{j} = 0$
    }{
    $y_{j} = 1$
}
   }

Update $Q_{\phi}$, $G^{i}_{\theta}$ and $G^{o}_{\omega}$ by descending:
\[
\frac{1}{n} \sum_{j=1}^{n} ( y_{j} \Vert G^{o}_{\omega}(\hat{z}_{j}) -  x_{i} \Vert^{2}_{2} + (1-y) \Vert G^{i}_{\theta}(\hat{z}_{i}) -  x_{j} \Vert^{2}_{2}) 
- \lambda \cdot( \log(D_{\gamma}(\hat{z}_{j}) ) )
\]

 }
 \caption{Wasserstein Autoencoder for Learning Distribution of Outliers (WALDO)}
 \label{alg:waldo}
\end{algorithm*}}
\newtheorem{thm}{Theorem}
\begin{document}

\title{\Large Probabilistic Outlier Detection and Generation}
\author{Stefano Giovanni Rizzo\thanks{Qatar Computing Research Institute} \\
\texttt{strizzo@hbku.edu.qa}
\and Linsey Pang\thanks{Walmart Labs} \\
\texttt{linsey.pang@walmartlabs.com}
\and Yixian Chen\footnotemark[2]\\
\texttt{yixian.chen@walmartlabs.com}
\and  Sanjay Chawla\footnotemark[1]\\
\texttt{schawla@hbku.edu.qa}
}

\date{}

\maketitle



\vspace*{-1cm}
\begin{abstract}
A  new method for outlier detection and generation is introduced by  lifting  data into the space of  probability distributions which are not analytically expressible, but from which samples can be drawn using a neural generator. Given a mixture of unknown latent inlier and outlier distributions, a Wasserstein double autoencoder is used to both detect and generate inliers and outliers. The proposed method, named WALDO (Wasserstein Autoencoder for Learning the Distribution of Outliers), is evaluated on classical data sets including MNIST, CIFAR10 and KDD99 for detection accuracy and robustness. We give an example of outlier detection on a real retail sales data set and an example of outlier generation for simulating intrusion attacks.
However we foresee many application scenarios where WALDO can be used. To the best of our knowledge this is the first work that studies both outlier detection and generation together.
\\
\textbf{Key Words:}  Outlier Detection, Outlier Generation, Wasserstein Distance, Wasserstein Autoencoder 
\end{abstract}

\section{Introduction}

A well known definition of outliers  states, 
``An outlier is an observation that {\it deviates} so much from other observations as to arouse suspicion that it was {\it generated} by a different mechanism~\cite{huber2004robust}.'' Many methods in outlier detection have been inspired by focusing on the deviation aspect of above definition. For example, distance-based techniques define outliers as those data points that are far away from their neighbors; density-based approaches search for outliers in regions of low relative density; the one-class svm method defines outliers as those points that lie outside the tighest  hypersphere containing  most of the points~\cite{chandola2007outlier,charubook}. \\

 In this work we will focus both on the {\it detection} and {\it generation} mechanisms of outliers.  In particular we will assume that data is generated from an unknown and unlabeled mixture of inlier and outlier distributions. We will have 
 access to  samples from  only the unlabeled and the inlier distributions. Our primary objective will be to
 infer the outlier distribution without having recourse to outlier samples. \\

To infer the outlier distribution we will take a probabilistic view of autoencoders which have
been used before for outlier detection~\cite{zhou2017anomaly}. An autoencoder can be seen as a self-mapping from an input space to
itself mediated through a bottleneck - a lower dimensional representation of the data being mapped.
In classical autoencoders, outliers are defined as those data points which have high reconstruction errors.
A probabilistic view is to perceive the self-mapping as one inducing a new probability
distribution on the input space, i.e., an autoencoder maps the original data distribution into a new 
distribution constrained by the bottleneck. \\


\begin{figure*}
        \begin{subfigure}[b]{0.65\textwidth}
               \includegraphics[width=0.78\linewidth]{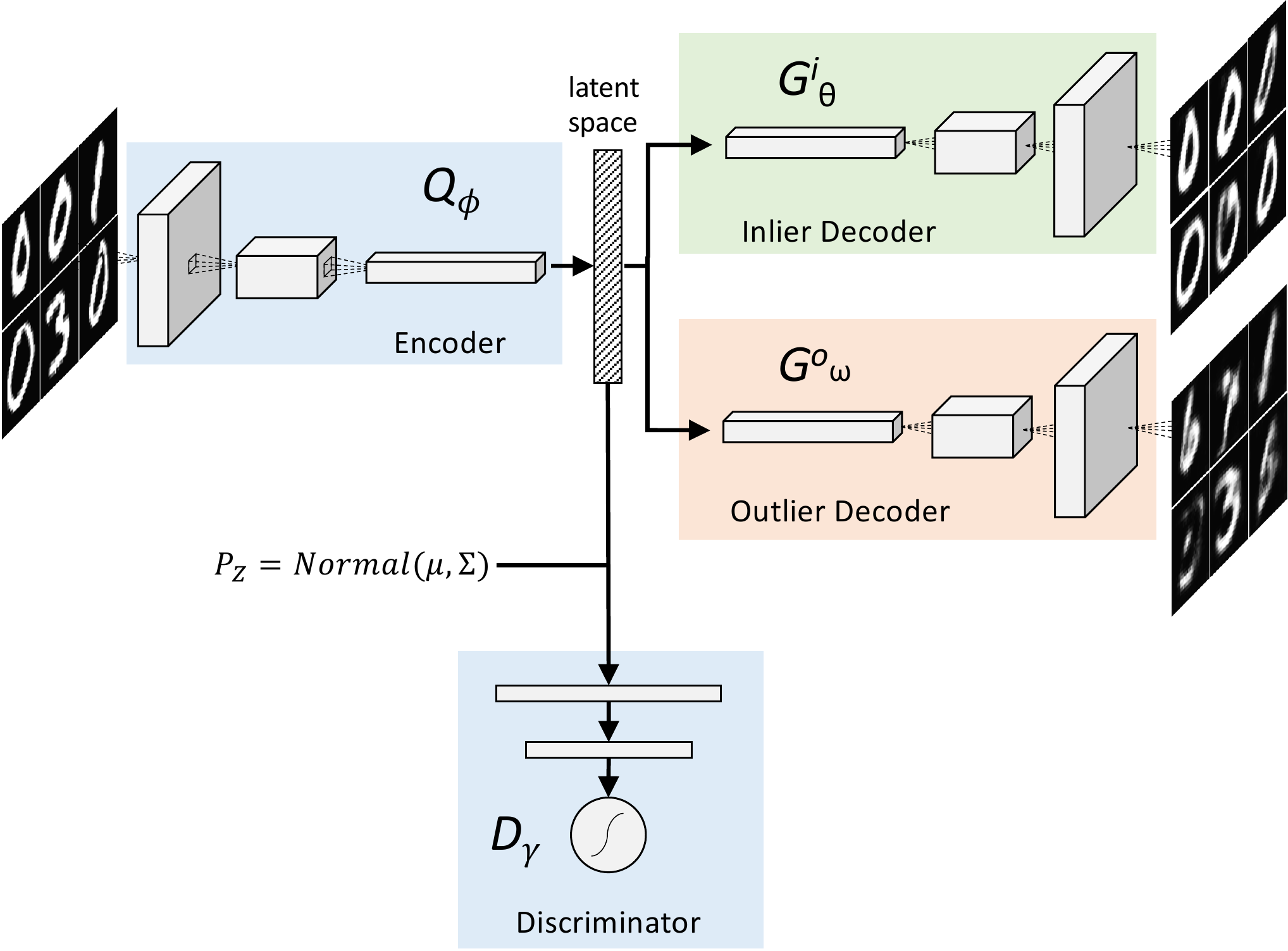}
               \caption{Architecture}
                \label{fig:architect}
        \end{subfigure}%
        \begin{subfigure}[b]{0.35\textwidth}
                \includegraphics[width=0.85\linewidth]{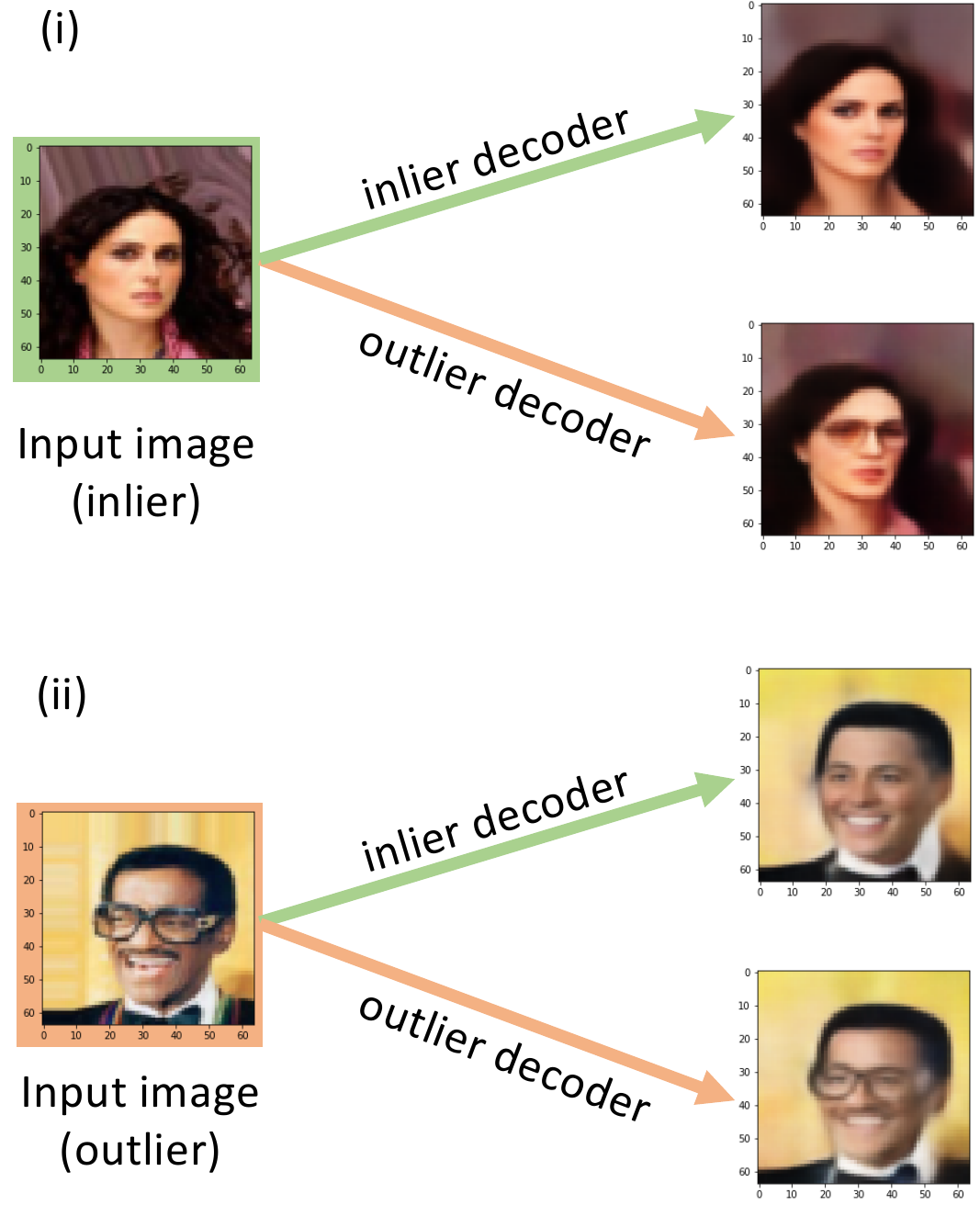}
                \caption{Application}
                \label{fig:Example}
        \end{subfigure}%
        
        \caption{WALDO Architecture and Application. (a) The architecture consists of a decoder for inliers and outliers with a common encoder trained using the Wasserstein distance; (b) A dataset of images (CelebA) where faces without glasses are inliers and with glasses are outliers: (i) An inlier input consisting of face without glasses comes out as one with glasses from the outlier decoder of WALDO. (ii) Similarly a face with glasses (outlier) comes out without glasses from the inlier decoder. }\label{fig:waldo}
\end{figure*}

To compare probability distribution we use the Wasserstein distance as an alternate to the standard
Kullback-Liebler (KL) divergence~\cite{villani}. Autoencoders based on the Wasserstein distance (WAEs) have been recently proposed and  shown to be accurate and efficient in generating complex distributions~\cite{tolstikhin2018wasserstein,patrini2018sinkhorn}. 
To generate outlier distributions,  we leverage a recent approach to use a double decoder architecture to
distinguish between inliers and outliers~\cite{tian2019learning}. An inlier and an outlier decoder (with a common encoder) compete with each other for
data points based on reconstruction error and thus can be identified without setting a threshold parameter. \\

Our approach, \Wa,  shown in Figure 1(a), encapsulates the double
decoder framework using the Wassertein distance, resulting in a generative detection model. For detection, the predicted class of a sample is given by the decoder with the least reconstruction error. For generation, a random sample in the latent space results in a generated inlier from the inlier decoder, and in a generated outlier in the outlier decoder. 
Figure 1(b) shows how an inlier image in input (inliers are faces without glasses) is left unperturbed by the inlier decoder but the outlier decoder adds glasses to the face. Similarly an outlier image (face with glasses) is not changed by the outlier decoder but the inlier decoder removes the glasses. Similar transformations can be obtained indirectly by GANs but in a more complicated manner, by first creating a mean deviation vector from the decoder and then feeding it back through the generator~\cite{RadfordMC15}.\\

The rest of the paper is structured as follows. In Section 2 we review related work with focus on deep learning models for outlier detection. We provide a short self-contained introduction to Wasserstein distance in Section 3 as that is a key building block of our approach. In Section 4, the \Wa autoencoder architecture is introduced consisting of four distinct components, together with the algorithm to train the model. A short theoretical analysis of our approach is the focus of Section 5. The experimental set up  and results is the subject of Section 6. We conclude with a short discussion and directions for future work in Section 7.

\section{Related Work}
Outlier detection is an extensively studied topic with diverse applications~\cite{charubook,chandola2007outlier}. With the advent of deep learning, variational auto-encoders (VAEs), generative adversarial networks (GANs) and other methods have   been proposed for outlier detection~\cite{chalapathy2017robust}.
While there are many recent works on deep learning models for outlier detection (\cite{chalapathy2019deep},\cite{deviation2019}), we will primarily survey the robust and generative ones in particular, as that is the focus of the paper.

\noindent
{\bf Robust Methods:} A  customization of autoencoders for outlier detection is to make them robust, i.e., the model is not disproportionately effected by the presence of outliers. This in turn makes it easier to detect outliers as they will tend to have a higher reconstruction error.  For example, Zhou et. al.~\cite{zhou2017anomaly} propose an autoencoder which decomposes the input data $X$ as sum of a low-dimensional manifold $U$ plus a sparse component $S$. A robust version of VAEs was recently proposed by Akrami et. al.~\cite{akrami2019robust} by using  $\beta$-divergence instead of KL 
divergence as a loss function. \\

\noindent
{\bf Generative Adversarial Networks (GANs)} have been recently extended for anomaly detection~\cite{di2019survey}. Like variational autoencoders, GANs can map a random distribution (e.g., Gaussian)  to an arbitrary data generating distribution $P_X$, through the use of a discriminator. Thus if we have a sample of ``normal'' data points then, in principle, we can learn the normal data manifold. However, BiGANs also learn an encoder function $E$ with the property that $E = G^{-1}$, where $G$ is the generator. Now given a query point $x$, if the reconstruction error $\|x - G(E(x))\|$ is large then $x$ is likely to be an outlier. In the experiment section, we will use one representative BiGAN as a baseline against which we will compare our proposed approach~\cite{zenati2018adversarially}.\\

\noindent{\bf Threshold-Free Models:}
After a model has been built, outliers can either be identified based on ranking or thresholding. For example, if reconstruction error (RE) is used as a measure of outlierness, then data points can be either ranked based on the RE score or  a threshold $\tau$ can be used such that those  points whose $RE > \tau$, are labeled as outliers. Tian et. al. recently ~\cite{tian2019learning}, proposed an autoencoder (CoRA) which uses two decoders: one for inliers and the other for outliers. Data points whose RE error is lower for the inlier decoder compared to the outlier decoder, were labeled as inliers and vice-versa. The use of two decoders frees the system from setting a pre-defined threshold value to identify outliers. We will use the idea of two decoders, in conjunction with the Wasserstein distance, to design an inlier and  an outlier generative model.\\

\noindent{\bf PU-Learning:} Positive and Unlabeled (PU) learning has a similar set up, i.e., we are
given samples from a positives and unlabeled classes along with the class prior ratio. The 
classical positive and negative loss function can be expressed as a linear
combination of a modified loss over the inlier and unlabeled distributions~\cite{kiryo2017positive,li2005learning}. However, our model does not require the class prior ratio and furthermore our approach is generative and geared towards outlier detection and not classification.

\section{Wasserstein Distance}

Wasserstein Distance is a measure of dissimilarity between two probability distributions just like the KL divergence. Intuitively, Wasserstein Distance measures the amount of work required to move and transform one pile of sand to another and that is why a special case of it is referred to as the Earth Movers distance~\cite{villani,panaretos}.

While there are several equivalent ways to define Wasserstein Distance, we will use what is sometimes called as the probabilistic definition. The \textit{p-}Wasserstein distance $W_{p}$ between a probability measure $\mu_1$ and $\mu_2$ on $\mathbb{R}^{d}$ is defined as
\[
W_{p}(\mu_1,\mu_2) = \inf_{{\substack{X \sim \mu_1}\\{Y \sim \mu_2}}}(\E\|X - Y\|^p)^{1/p}
\]

 \subsection{Wasserstein Autoencoders (WAEs)}

An autoencoder based on $W_p$ (WAE) was recently proposed~\cite{tolstikhin2018wasserstein}.
Consider an autoencoder $h: X \xrightarrow{Q} Z \xrightarrow{G} X$. Let $P_X$ be the original distribution and $h\#P_X$ be the output distribution induced by $h$.  Then a WAE learns  a function $h$ which minimizes
$W_p(P_X,h\#P_X)$. However, both the encoding ($Q$) and decoding function
($G$) can be viewed in a probabilistic fashion. Thus, if $Q(Z|X)$ is the encoding distribution and $P_G$ is the decoding distribution on $X$, then the $W_p$ between $h\#P_X$ and $P_X$ can be decoupled and  expressed in terms of $P_X$ and $P_G$.
\[
W_p(P_X,P_G) = \underset{Q:Q_Z = P_Z}{\inf}\E_{P_X}\E_{Q|Z}{\|X - G(Z)\|_p}
\]
Here $Q(Z) = \E_{X}Q(Z|X)$. To find the $P_G$ which minimizes $W_p(P_X,P_G)$,  the constraint
$Q_Z = P_Z$ is relaxed and the following objective is proposed.
\begin{align*}
D_{\text{WAE}}(P_X,P_G) = &
\underset{Q(Z|X)}{\inf}\E_{P_X}\E_{Q|Z}\|X - G(Z)\|_p \\
&+  \lambda.\mathcal{D}_Z(Q_Z,P_Z)
\end{align*}

$\mathcal{D}$ is a divergence and in the WAE-GAN version (which we will use), $\mathcal{D_Z}(Q_Z,P_Z) = D_{JS}(Q_Z,P_Z)$ is used and learned
in an adversarial manner. $D_{JS}$ is the symmetric KL divergence.

\section{Problem Definition and WALDO}
We now formally define the problem and propose
the \Wa architecture as a solution. \\[2ex]
\noindent
{\bf Given:}  $P^{u}_X = (1- \nu)P^{i}_X 
+ \nu P^{o}_X$ be a mixture of an inlier 
and an outlier distribution on an input space $X = \mathbb{R}^{d}$  for $ 0 < \nu < 1$. No assumptions are made on the analytical
form of the three distributions: $P^{i}_X, P^{o}_X$ and
$P^{u}_X$.  Let $X^{u}$ and $X^{p}$ be samples
from $P^{u}_X$ and $P^{i}_{X}.$ \\[2ex]
{\bf Objective:}
Learn generating distributions $P^{i}_G$ and $P^{o}_G$  on $X$ which
minimize $W_p(P^{i}_X, P^{i}_G)$ and $W_p(P^{o}_X,P^{o}_G)$.\\[2ex]
{\bf Constraints:}  We do not
have access to samples from $P^{o}_{X}.$
\subsection{WALDO Architecture} The architecture of 
\Wa is a generalization of the WAE~\cite{tolstikhin2018wasserstein} and CoRa~\cite{tian2019learning}  to simultaneously detect and generate inliers and outliers. \Wa consists of four components as shown in Figure \ref{fig:waldo}: 
\begin{enumerate}
\item An inlier decoder and generator denoted as $G^{i}_{\theta}$ which
maps the latent space $Z$ to the output space $X$.  The inlier
decoder will induce a distribution $P^{i}_G$. Once trained,
$G^{i}_{\theta}$ can take an element generated from $P_Z$ and
produce samples which will appear to be from $P^{i}_X$.
\item An outlier decoder and generator denoted as
$G^{o}_{\omega}$. Like the inlier decoder, the outlier
decoder can be used to generate outlier samples which
will appear to be from $P^{o}_X$.
\item A common encoder $Q_{\phi}$ which maps the input space
$X$ into the latent space $Z$.  In the original WAE paper,
the constraint $\E_{X \sim P_X}(Q|Z) = P_Z$ is enforced using
an adversarial discriminator loss. However in \Wa we 
have the option of either enforcing  $\E_{X \sim P^{i}_X}(Q|Z) = P_Z$ or $\E_{X \sim P^{u}_X}(Q|Z) = P_Z$. In our experiments
we have consistently observed that the former gave better
results than the latter. This is not unexpected as by only
enforcing the constraints on the inliers there will be a smaller
chance that the inliers and the outliers will be mapped to
the same region of the latent space.
\item A discriminator $D_{\gamma}$,trained
in an adversarial manner like in traditional GANs. The role of $D_{\gamma}$ is to enforce the
constraint $Q_Z = P_Z$. However, unlike traditional GANs, $D_{\gamma}$ operates
in the lower-dimensional latent space $Z$. Recall in adversarial learning,
the encoder $Q$ is trying to ``fool'' the discriminator to treat its samples
as those from the prior $P_Z$. 
\end{enumerate}


\subsection{Algorithm}
WALDO is defined in Algorithm~\ref{alg:waldo}. First the discriminator is trained by ascending (line~7) to discriminate between samples from the prior $P_z$ and samples from the encoder $Q_{\theta}$. In practice only encoded inliers samples will be forced to match the prior distribution ({\bf positive-only} $D_\gamma$ training).
In the training of the autoencoder, only the decoder with lower reconstruction error will be selected in the loss for each data point (lines~9-16). Note that in the competition for a data point, the inlier decoder has seen more samples, thus it has a natural {\bf advantage} over the outlier decoder in decoding both outliers and inliers. Conversely but less frequently, a random initialization may lead to an advantage of the outlier decoder, with a consequently spurious training during the initial epochs.
To cope with both cases of imbalance we introduce an {\bf advantage} term (line 8), that penalizes the reconstruction error of the decoder with the best reconstruction error.

\section{Analysis of WALDO}
We analyze theoretical aspects of 
\Wa for the special case of $p=2$. In particular, we show that under certain circumstances,  $W_2(P^{u}_X,P^{o}_G)$ upper bounds a positive weighted sum of $W_2(P^{o}_X,P^{o}_G)$, $W_2(P^{i}_X,P^{i}_G)$ and $W_2(Q^Z,P^Z)$.  Thus by minimizing
an upper bound we can indirectly optimize the decoders. We use the following 
characterization of WAE~\cite{patrini2018sinkhorn} for decoders with the added assumption that they are  Lipschitz with constant $\gamma$.
\begin{align*}
W_{2}(P_X,P_G) 
= &\inf_{Q}\sqrt{\E_{X \sim P_X}\|X - G(Q(X))\|^{2}} \\
&+ \gamma.W_{2}(Q_Z,P_Z)
\end{align*}
\begin{thm}
For a system with an inlier decoder $P^{i}_G$ and an outlier decoder $P^{o}_G$ and a shared deterministic encoder $Q$, the following holds:
\begin{align*}
W_2(P^{u}_X,P^{o}_G) \geq \sqrt{\frac{\nu}{2}}W_{2}(P^{o}_X,P^{o}_G) +
\sqrt{\frac{1 -\nu}{2}}W_{2}(P^{i}_X,P^{i}_G) +\\ \gamma.\left(1 -\sqrt{\frac{\nu}{2}} -
      \sqrt{\frac{1-\nu}{2}}\right)W_{2}(Q_Z,P_Z)
\end{align*}
\end{thm}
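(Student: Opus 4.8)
The plan is to combine the WAE characterization quoted just above the theorem with the mixture structure of $P^u_X$ and the competition property that defines the two decoders. Throughout, write $r(P,G)=\sqrt{\E_{X\sim P}\|X-G(Q(X))\|^2}$ for the reconstruction cost of decoder $G$ on distribution $P$ under the shared deterministic encoder $Q$, and set $c=W_2(Q_Z,P_Z)$. First I would apply the characterization to the pair $(P^u_X,P^o_G)$, taking the shared encoder $Q$ to be the one realizing the infimum so that the equality $W_2(P^u_X,P^o_G)=r(P^u_X,G^o)+\gamma c$ holds. Since $P^u_X=(1-\nu)P^i_X+\nu P^o_X$ and the squared reconstruction error is integrated against the measure, the cost decomposes linearly: $r(P^u_X,G^o)^2=(1-\nu)\,r(P^i_X,G^o)^2+\nu\,r(P^o_X,G^o)^2$.

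Second, I would lower bound the square root with the elementary inequality $\sqrt{s+t}\ge\tfrac{1}{\sqrt2}(\sqrt s+\sqrt t)$, which follows from $(\sqrt s+\sqrt t)^2\le 2(s+t)$ by AM--GM, applied with $s=(1-\nu)r(P^i_X,G^o)^2$ and $t=\nu\,r(P^o_X,G^o)^2$. This yields $r(P^u_X,G^o)\ge\sqrt{\tfrac{1-\nu}{2}}\,r(P^i_X,G^o)+\sqrt{\tfrac{\nu}{2}}\,r(P^o_X,G^o)$, which already produces the two characteristic $\sqrt{\nu/2}$ and $\sqrt{(1-\nu)/2}$ coefficients.

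Third, I would convert the two reconstruction terms into genuine Wasserstein distances. For the outlier term, the characterization applied to $(P^o_X,P^o_G)$ gives, since the displayed $Q$ need not be optimal there, the upper bound $W_2(P^o_X,P^o_G)\le r(P^o_X,G^o)+\gamma c$, i.e.\ $r(P^o_X,G^o)\ge W_2(P^o_X,P^o_G)-\gamma c$. The inlier term is the delicate one: it involves $G^o$ reconstructing \emph{inliers}, whereas the target features the inlier decoder $G^i$. Here I would invoke the ``certain circumstances'' hypothesis, namely the defining competition property that inliers are reconstructed at least as well by $G^i$ as by $G^o$, to obtain $r(P^i_X,G^o)\ge r(P^i_X,G^i)$, and then the characterization for $(P^i_X,P^i_G)$ to get $r(P^i_X,G^i)\ge W_2(P^i_X,P^i_G)-\gamma c$. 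Substituting both lower bounds (the coefficients being positive, so the inequalities survive regardless of the sign of $W_2-\gamma c$) and adding back the $\gamma c$ term from the first step collects the coefficient $\gamma\bigl(1-\sqrt{\nu/2}-\sqrt{(1-\nu)/2}\bigr)$ on $W_2(Q_Z,P_Z)$, giving exactly the claimed inequality.

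The main obstacle is the directional bookkeeping of the characterization: it must be used as an \emph{equality} (at the optimal $Q$) on the left-hand pair but only as the trivial \emph{upper} bound (same $Q$, not necessarily optimal) on the two right-hand pairs, so the two usages pull in opposite directions and must be kept straight. The whole argument then hinges on the competition inequality $r(P^i_X,G^o)\ge r(P^i_X,G^i)$, which is precisely the regime the theorem restricts to; I would state that assumption explicitly rather than treat it as automatic.
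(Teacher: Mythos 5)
Your proposal follows essentially the same route as the paper's proof: the WAE characterization applied to $(P^{u}_X,P^{o}_G)$, the linear decomposition of the squared reconstruction cost over the mixture, the inequality $\sqrt{s+t}\geq\tfrac{1}{\sqrt{2}}(\sqrt{s}+\sqrt{t})$, the competition property that $G^{i}$ reconstructs inliers at least as well as $G^{o}$, and the conversion of each reconstruction term back to a Wasserstein distance at the cost of a $\gamma\,W_2(Q_Z,P_Z)$ term. Your handling of the infimum (fixing the optimal $Q$ on the left and using it as a feasible suboptimal encoder on the right) is just a cleaner rendering of the paper's step $\inf(f+g)\geq\inf f+\inf g$, and your explicit flagging of the competition hypothesis matches the paper's item (iii).
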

\begin{proof} See Supplementary Text.
\end{proof}
\noindent
{\bf Implication of Theorem:} The above theorem shows that by using the Wasserstein metric we can formally
distribute the error between the unlabeled data distribution $P^{u}_X$ and the outlier (inlier) generator $P^{o}_G (P^{i}_G)$ across the two two decoders. The coefficients $\sqrt{\frac{\nu}{2}}$ also suggests that 
if $\nu$ is very small then an algorithm which tries to minimize $W_2(P^{u}_X,P^{o}_G)$ will effectively
expend ``more effort'' in optimizing $W_{2}(P^{i}_X,P^{i}_G)$ than $W_{2}(P^{o}_X,P^{o}_G)$. The use of
Advantage in the algorithm is way to compensate the natural weakness of optimizing the outlier decoder
due to the small value of $\nu$ even though the dependence is improved as the factor is $\sqrt{\frac{\nu}{2}}$ will be higher than $\nu < 1/2.$
Note that in practice the Lipschitz condition can be enforced using gradient clipping.

\section{Experiments}
In this section we empirically evaluate the effectiveness of WALDO. We report on four sets of experiments.
\begin{enumerate}
    \item We carry out an ablation study of \Wa  by varying its internal components. Specifically, we evaluate the accuracy of \Wa  when the discriminator is applied to only inlier data, i.e., data sampled from $P^{i}_X$. Similarly the impact of training \Wa  with and without the use of {\bf advantage} is tested.
    \item We evaluate \Wa on its ability to generate outliers. We test whether $P^{o}_G$ can be used to generate
    new network intrusion attacks using the KDD99 data set.
    \item We present a real case study where \Wa is applied on real sales data to accurately discover extremely rare patterns with high recall.
    \item Finally we compare \Wa  with other state of the art and representative deep learning based methods for anomaly detection: DeepSVDD ~\cite{ruff2018deep}, ALAD ~\cite{zenati2018efficient}, WAE~\cite{tolstikhin2018wasserstein} and  CoRA ~\cite{tian2019learning}. The comparison is carried out by varying contamination level of the training set and outlier ratios in test dataset.
\end{enumerate}

\subsection{Datasets}
We use four publicly available datasets for experiments:
\begin{enumerate}
\item \textbf{MNIST} \cite{mnist}: containing $60$k training samples and $10$k test samples from 10 digit classes. Each digit is a $28\times 28 $ grayscale image. We choose the digit $0$ as the inlier class and the others as outliers.
\item \textbf{Fashion MNIST} \cite{fmnist}: consisting of $60$k training samples and $10k$ test samples from $10$ classes. Each sample is a $28\times 28 $ grayscale image in a clothes category. We use the class $0$ as inliers ($X^{i}$), and the others as outliers.
\item \textbf{KDD99} \cite{kdd}: a large-scale network traffic data with 121 features in each sample. We use $10\%$ of the dataset to extract the inliers ($X^{i}$)  and another $10\%$ for the unlabeled data. This data set is also used to show the capability of \Wa to generate new meaningful attacks.
\item \textbf{CIFAR10} \cite{cifar10}: consisting of $60$k $32\times32$ color images in 10 classes including $50$k training and $10$k test images. 
\end{enumerate}

\subsection{Ablation study. }

\begin{figure*}[h!]
\begin{subfigure}{.45\textwidth}
\centering
  \includegraphics[width=1\linewidth]{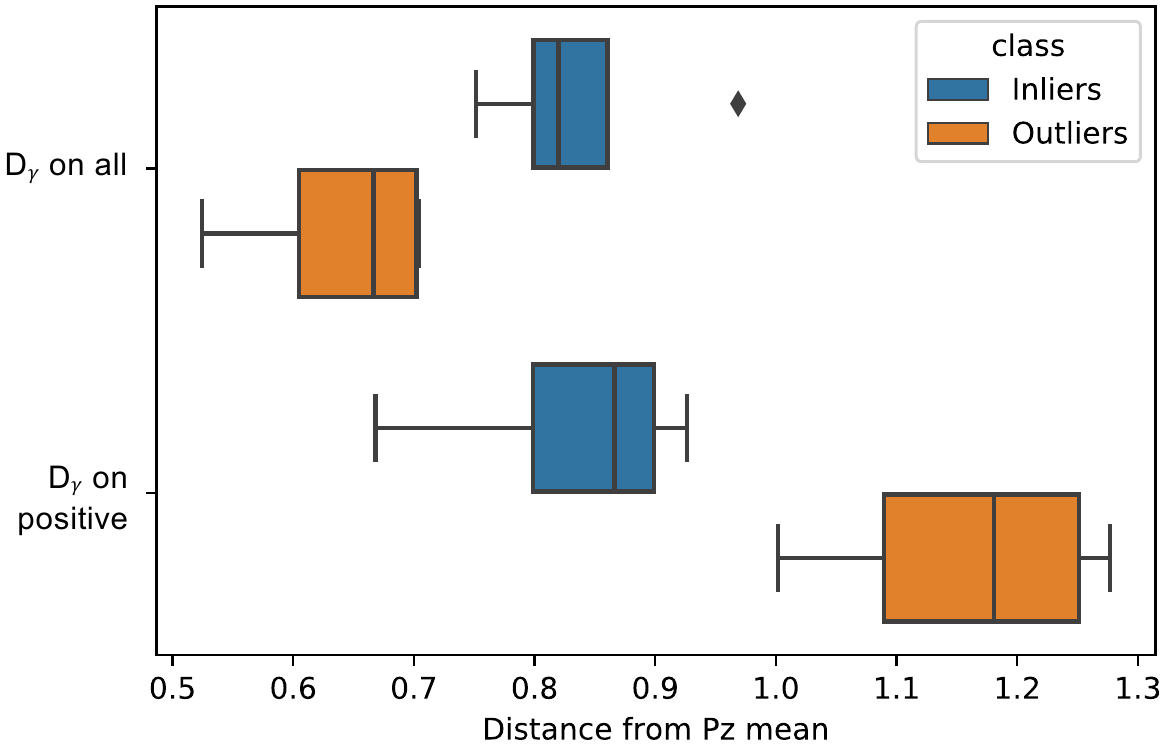}
  \caption{Distance of outliers from $P_Z$ mean increases.}
  \label{fig:pu}
  \end{subfigure}
  \quad\quad\quad
  \begin{subfigure}{.45\textwidth}
  \centering
  \includegraphics[width=1\linewidth]{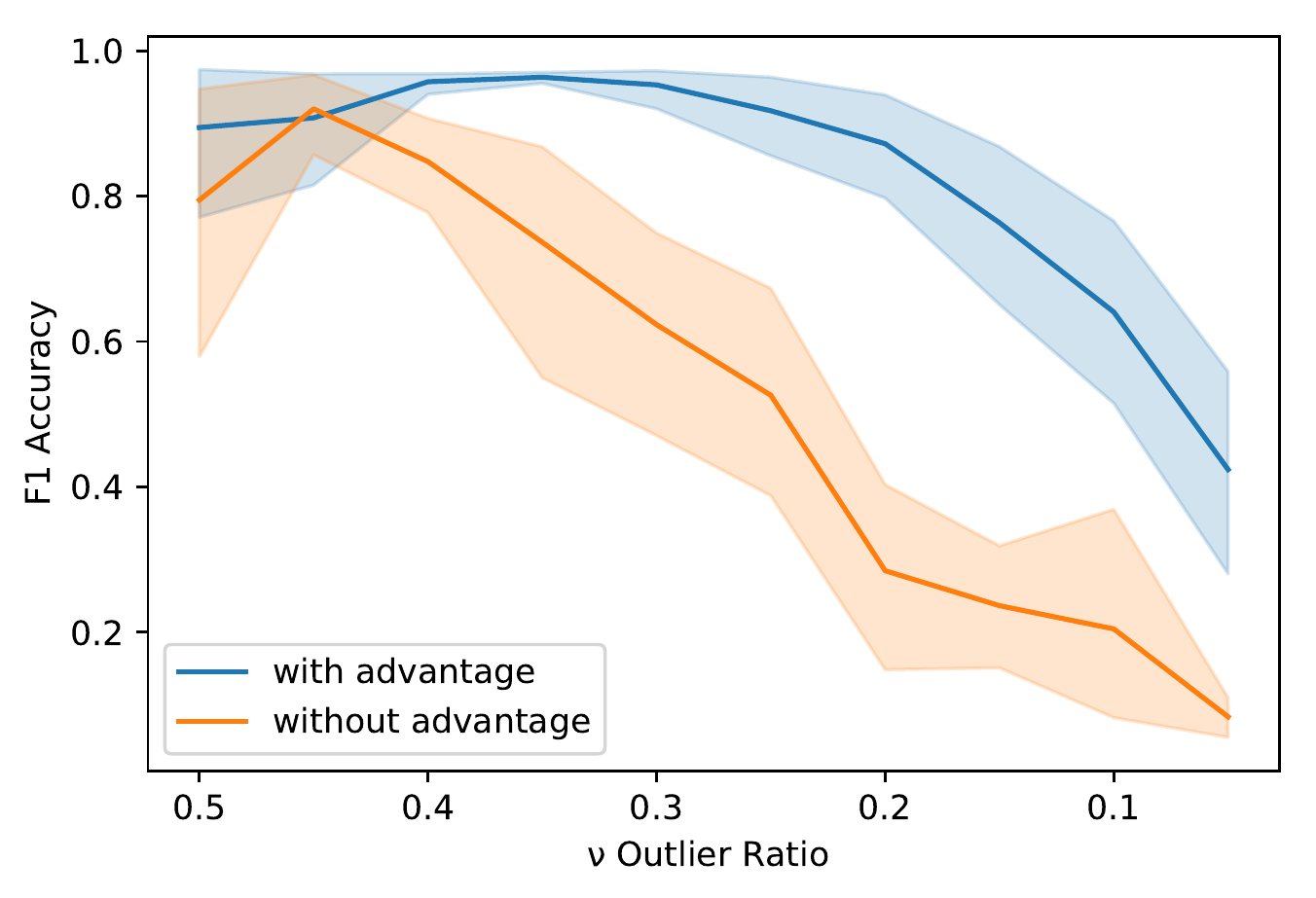}
  \caption{Impact of Advantage}
  \label{fig:advantage}
  \end{subfigure}
  \caption{Impact of positive training  on $D_{\gamma}$ and using the Advantage for Generators}
  \label{fig:positive}
\end{figure*}

\textbf{Impact of Positive-only $D_{\gamma}$ on \Wa}.
Training the discriminator $D_\gamma$ only on the positive data (labeled inliers) helps the model in separating the two distributions in the latent space, having effects both on the latent space and on the output space. In Figure \ref{fig:pu} we aggregate the effects of  positive training for $D_\gamma$, showing the distance of the encoded samples from the mean of the distribution $P_Z$.
As expected, the outliers get mapped further from the mean of the prior distribution $P_Z$, while the inliers are closer.

\textbf{Impact of Advantage on \Wa}.
Recall that in \Wa the two decoders, $G^{i}$ and $G^{o}$
compete with each other to get points assigned to them.
However because of the availability of the $X^{i}$ set,
$G^{i}$ has a natural advantage to have a low reconstruction
error on data points in $X^{u}$. To overcome the natural
bias, we introduced the \textbf{Advantage} term (see Line 9 in Algorithm 1).
In Figure \ref{fig:advantage} we show how using the advantage penalty substantially improves the results (higher F1 score), in particular when the outlier ratio becomes smaller. 
Moreover, we observed how employing the advantage penalty resulted in more reproducible results over different random seeds, as can be seen from the smaller variance in the accuracy.

\subsection{Extreme Outlier Discovery: Real Case Study}

We give an example of how \Wa can be used to detect extremely rare patterns on a real data set acquired from a large retailer. We look at the weekly sales pattern of one product $X$ over nearly four years, (from $2016/03/20$ to $2019/12/29$ in all $52$ price-markets in the US, pre COVID-19),  which is typically sold more on weekends than weekdays. Each data point is a vector of seven dimensions, and we took a small fraction of the data points and labeled them as inliers if the volume of the product sold on either Saturday or Sunday was greater than  any of the weekdays. There was a total of $10,234$ inliers and $62$ outliers. Thus the percentage of outliers was $0.61\%$. \Wa was only given a small labeled sample of inliers $(2,047)$ and all the remaining set was unlabeled. Note we chose this pattern, which is ``easy to query'' as it makes it straightforward to characterize outliers. Recall \Wa does not see any labeled outliers.
Here are the key observations:
\begin{enumerate}
\item All the outliers were assigned to the outlier decoder (high recall).
\item However, many inliers were also assigned to the outlier decoder just because of the extreme skewness of the data set (low precision).
\item If we ranked all the data points by reconstruction error of the inlier decoder we observed an average precision (AP) of close to $46\%$ for the outliers. Thus even though outliers constitute an extremely small percentage of the data, we are able to locate them at the top of the list. This demonstrates that \Wa has the promise to detect rare patterns.
\end{enumerate}

\begin{figure}[H]
    \includegraphics[width=1.\textwidth]{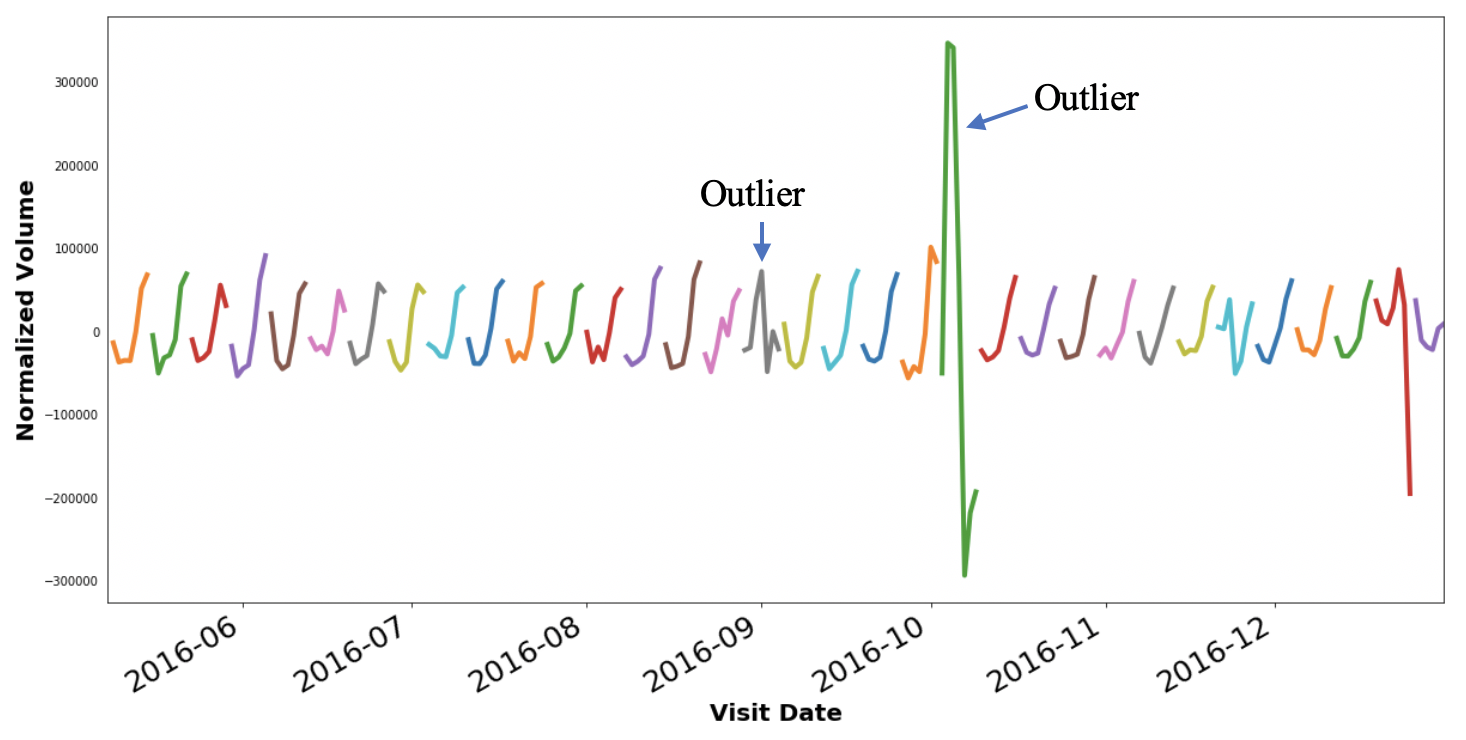}
  \caption{Outlier detection of retail patterns. Outliers constitute a very small percentage of the data.}
  \label{fig:walmart}
\end{figure}

\subsection{Capability of generating new attacks on the KDD99 Data Set}

One of the novelties of our work is to provide the capability of generating new outliers. For example we can generate realistic new attacks using the KDD99 dataset. Our attacks are generated using the trained \Wa network architecture for the data set. Both positive and unlabeled were used in the training model. We used trained encoder on the inliers and outliers independently from KDD99 to generate the encoded data in which two distributions are formed. From the two distributions, we sampled two independent groups of Gaussian random variables $X_{i}  \sim \mathcal{N}(P_{Z_i},\,\sigma_{i}^{2})\,$, $X_{o}  \sim \mathcal{N}(P_{Z_o},\,\sigma_{o}^{2})\,$. Then we decoded the samples to reconstruct the data.

\begin{figure}[H]
\centering
  \includegraphics[width=0.9\textwidth]{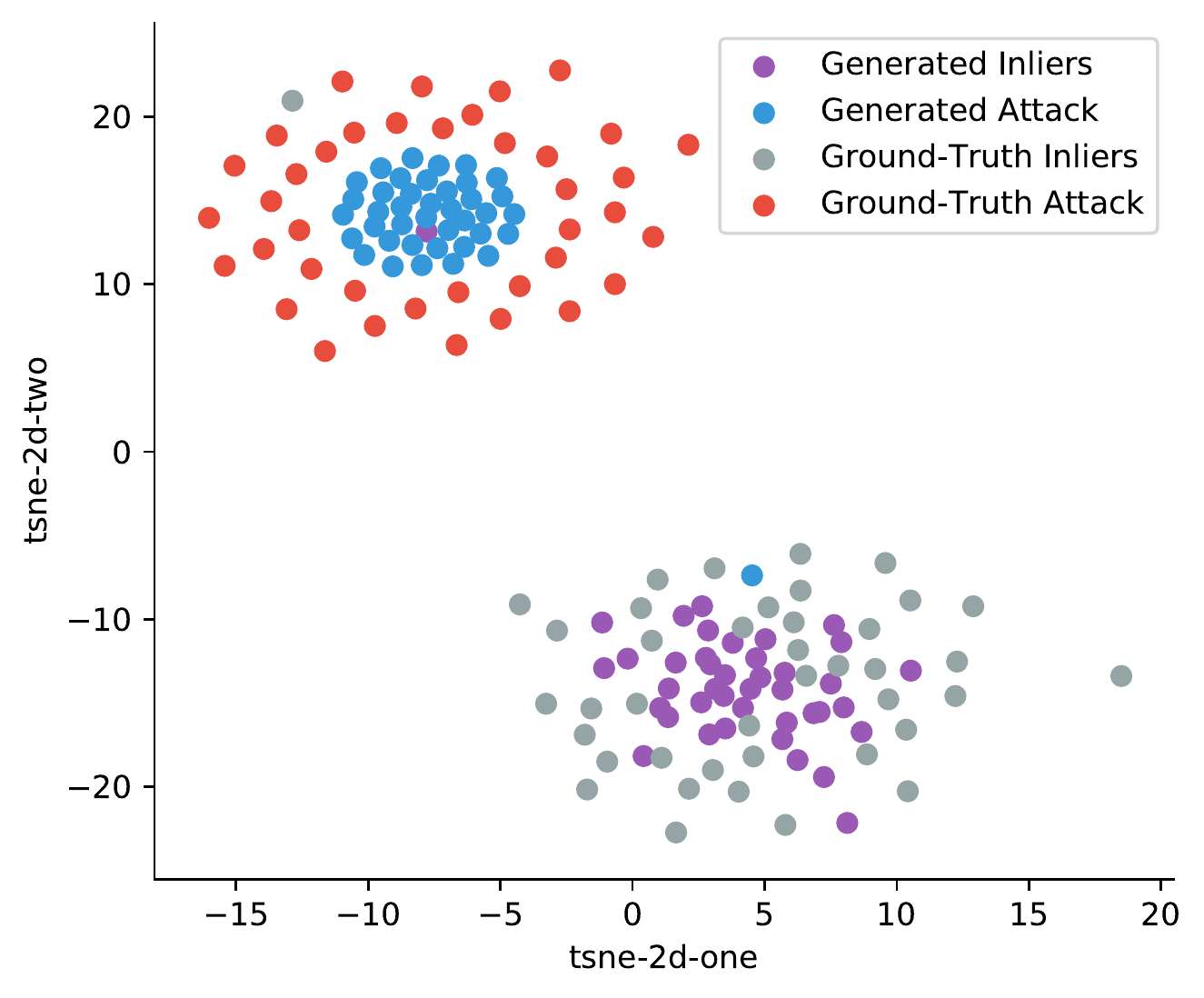}
  \caption{KDD99 Outlier Generation }
  \label{fig:gen_kdd}
\end{figure}

Figure \ref{fig:gen_kdd} shows the real and generated network traffic samples in a t-distributed stochastic neighbor embedding(TSNE) plot. In order to quantitatively assess the quality of the
generated data, we used the Euclidean Distance to evaluate on $10^3$ samples. It can be seen in the TSNE plot that the distribution of Generated attacks is tighter than the distribution of inliers, which means that Wasserstein distance $W(P^{o}_X,P^{o}_G)$ is larger than $W(P^{i}_X,P^{i}_G)$. One explanation to this interesting observation is that we do not direct optimize the distance between $P^{o}_X$ and $P^{o}_G$. Thus \Wa can effectively create attacks and normal network traffic data which can be used to detect anomalies in any network examination system.

\subsection{Convergence Efficiency} 
\begin{figure}[h!]
    \includegraphics[width=0.9\textwidth]{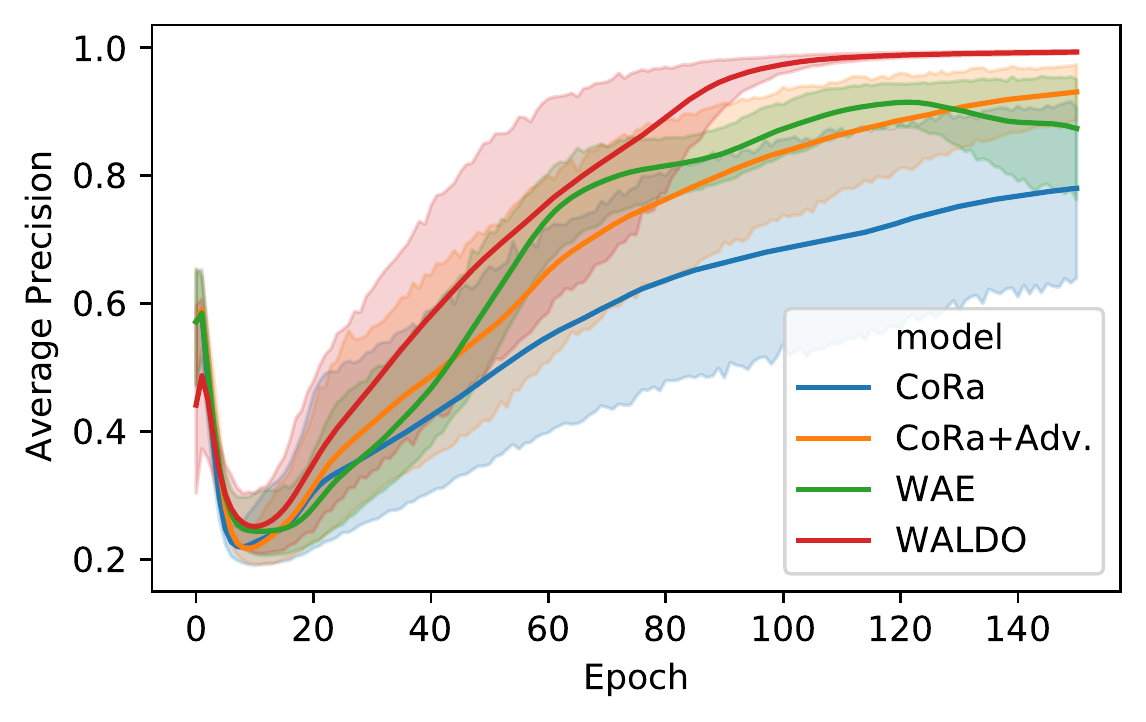}
    \centering
  \caption{Average Precision on testing set for WALDO, WAE~\cite{tolstikhin2018wasserstein} and CoRa~\cite{tian2019learning}. }
	\label{fig:convergence}
\end{figure}
In this experiment, we measure the performance of \Wa during  training by tracking the number of epochs needed to converge to a stable result on MNIST. 
In Figure~\ref{fig:convergence}, the results of \Wa is compared to CoRa~\cite{tian2019learning}, both in its original setting and improved with the proposed Advantange penalty, and the Wasserstein Autoencoder (WAE ~\cite{tolstikhin2018wasserstein}, showing the average precision (AP) mean and standard deviation over eight runs on the unlabeled set.
The results show that \Wa reaches higher precision faster, while convergence is obtained consistently with very small variance across different runs.

\begin{table*}[h!]

\centering
\resizebox{\linewidth}{!}{

\begin{tabular}{lcrr|rr|rr|rr|rr}
\cmidrule{3-12}

& & \multicolumn{2}{c|}{\textbf{DeepSVDD \cite{ruff2018deep}} }
& \multicolumn{2}{c|}{\textbf{ALAD \cite{zenati2018adversarially}}} 
& \multicolumn{2}{c|}{\textbf{WAE \cite{tolstikhin2018wasserstein}}} 
& \multicolumn{2}{c|}{\textbf{CoRa \cite{tian2019learning} + Advantage}} 
& \multicolumn{2}{c}{\textbf{WALDO}}  \\ 

\cmidrule{3-12}

&  $\nu$ & \multicolumn{1}{c}{AUC} & \multicolumn{1}{c|}{AUPRC}  
 &  \multicolumn{1}{c}{AUC}   & \multicolumn{1}{c|}{AUPRC} 
 &  \multicolumn{1}{c}{AUC} & \multicolumn{1}{c|}{AUPRC} 
 &  \multicolumn{1}{c}{AUC}  & \multicolumn{1}{c|}{AUPRC}
 & \multicolumn{1}{c}{AUC} & \multicolumn{1}{c}{AUPRC}\\ 
\toprule

\multirow{ 4}{*}{\textbf{MNIST}} & 0.05
& $\boldsymbol{99.50  \pm 0.01} $
& $\boldsymbol{92.92  \pm 0.0}$
& $93.42  \pm 2.73$ 
& $37.92  \pm 14.82$ 
& $97.54  \pm 1.41$ 
& $61.55  \pm 20.91$
& $98.93  \pm 0.62$ 
& $77.55  \pm 16.34$ 
& $99.33  \pm 0.56$ 
& $90.30  \pm 6.58$
\\
 & 0.1
& $99.26  \pm 0.15 $
& $93.81  \pm 1.3$
& $94.15  \pm 1.68$ 
& $58.73  \pm 9.68$ 
& $98.24  \pm 0.92$ 
& $79.75  \pm 13.56$
& $99.13  \pm 0.73$ 
& $89.19  \pm 12.39$ 
& $\boldsymbol{99.77  \pm 0.13}$ 
& $\boldsymbol{98.07  \pm 0.99}$
\\
 & 0.2
& $98.96  \pm 0.29$ 
& $96.11  \pm 6.9$
& $93.62  \pm 3.29$ 
& $73.93  \pm 14.24$ 
& $95.97  \pm 4.08$ 
& $86.35  \pm 13.64$
& $98.90  \pm 0.29$ 
& $95.91  \pm 2.33$ 
& $\boldsymbol{99.82  \pm 0.07}$ 
& $\boldsymbol{99.29  \pm 0.28}$
\\
 & 0.5
& $98.46  \pm 0.13 $
& $98.56  \pm 0.1$
& $92.45  \pm 1.73$ 
& $89.70  \pm 3.13$ 
& $88.93  \pm 5.89$ 
& $88.67  \pm 4.67$
& $96.71  \pm 1.36$ 
& $96.85  \pm 0.74$ 
& $\boldsymbol{99.13  \pm 0.28}$ 
& $\boldsymbol{99.14  \pm 0.28}$
\\ \midrule

\multirow{ 4}{*}{\textbf{FMNIST}} & 0.05
& $\boldsymbol{90.39  \pm 0.03}$
& $\boldsymbol{41.67  \pm 3.75}$
& $70.65  \pm 2.03$ 
& $32.85  \pm 3.11$ 
& $89.77 \pm 0.41$ 
& $38.63  \pm 2.40$
& $86.84  \pm 0.91$ 
& $43.45  \pm 2.92$ 
& $89.48  \pm 0.66$ 
& $41.65  \pm 1.41$
\\
 & 0.1
& $89.67  \pm 1.17 $
& $58.72  \pm 2.82$
& $70.80  \pm 1.03$ 
& $47.38  \pm 2.34$ 
& $\boldsymbol{91.23  \pm 0.56}$ 
& $58.88  \pm 0.88$
& $86.82  \pm 0.97$ 
& $59.35  \pm 3.69$ 
& $90.01  \pm 0.35$ 
& $\boldsymbol{64.90  \pm 1.71}$
\\
 & 0.2
& $88.46 \pm 0.54 $
& $68.31  \pm 1.86$
& $71.26 \pm 1.08$ 
& $59.81  \pm 4.13$ 
& $\boldsymbol{91.17 \pm 0.76}$ 
& $72.56  \pm 2.07$
& $84.20  \pm 1.38$ 
& $66.63  \pm 3.50$ 
& $88.47  \pm 1.18$ 
& $\boldsymbol{72.87  \pm 2.96}$
\\
 & 0.5
& $86.99  \pm 0.20 $
& $85.68 \pm 0.1$
& $70.20  \pm 2.21$ 
& $64.94  \pm 3.02$ 
& $\boldsymbol{90.77  \pm 0.44}$ 
& $\boldsymbol{89.81  \pm 0.38}$
& $82.91  \pm 1.62$ 
& $85.22  \pm 1.87$ 
& $86.73  \pm 2.06$ 
& $88.06  \pm 1.99$
\\\midrule

\multirow{ 4}{*}{\textbf{CIFAR10}} & 0.05
& $60.74  \pm 4.41 $
& $6.73  \pm 1.0$
& $76.46  \pm 1.12$ 
& $11.77  \pm 0.63$ 
& $78.88  \pm 0.02$ 
& $13.15 \pm 0.01$
& $79.05  \pm 0.09$ 
& $13.19  \pm 0.04$ 
& $\boldsymbol{79.13  \pm 0.15}$ 
& $\boldsymbol{13.19  \pm 0.02}$
\\
 & 0.1
& $58.09  \pm 4.24 $
& $11.78 \pm 1.4$
& $71.05  \pm 1.90$ 
& $20.72  \pm 1.51$ 
& $74.03  \pm 0.05$ 
& $23.25  \pm 0.03$
& $74.04  \pm 0.03$ 
& $23.22  \pm 0.02$ 
& $\boldsymbol{74.07  \pm 0.04}$ 
& $\boldsymbol{23.28  \pm 0.04}$
\\
 & 0.2
& $56.57  \pm 4.91 $
& $21.75  \pm 2.13$
& $67.52  \pm 1.34$ 
& $32.41  \pm 1.43$ 
& $70.54  \pm 0.03$ 
& $\boldsymbol{35.37 \pm 0.01}$
& $70.57  \pm 0.09$ 
& $35.32  \pm 0.06$ 
& $\boldsymbol{70.58  \pm 0.01}$ 
& $\boldsymbol{35.37  \pm 0.01}$
\\
 & 0.5
& $\boldsymbol{66.71  \pm 0.71}$
& $52.39  \pm 3.12$
& $63.53  \pm 0.83$ 
& $64.38  \pm 2.14$ 
& $66.65  \pm 0.09$ 
& $67.44  \pm 0.02$
& $58.56  \pm 8.99$ 
& $67.46  \pm 0.02$ 
& $\boldsymbol{71.64  \pm 0.24}$ 
& $\boldsymbol{67.57  \pm 0.21}$\\\midrule

\multirow{ 4}{*}{\textbf{KDD}} & 0.05
& $99.38  \pm 0.24$ 
& $84.21  \pm 5.54$
& $97.94  \pm 6.05$ 
& $74.98  \pm 3.13$ 
& $99.25 \pm 0.08$ 
& $74.21  \pm 2.50$
& $98.59  \pm 0.63$ 
& $74.76  \pm 11.48$
& $\boldsymbol{99.51 \pm 0.02}$ 
& $\boldsymbol{86.34  \pm 1.03}$ 
\\
 & 0.1
& $99.65  \pm 0.25$ 
& $91.17  \pm 3.48$
& $96.17  \pm 2.13$ 
& $83.65  \pm 4.64$ 
& $99.24  \pm 0.12$ 
& $85.84  \pm 2.70$
& $98.72  \pm 0.59$ 
& $86.85  \pm 5.39$
& $\boldsymbol{99.65  \pm 0.07}$ 
& $\boldsymbol{95.67 \pm 1.35}$ 
\\
 & 0.2
& $99.30  \pm 0.14$ 
& $95.78  \pm 0.87$
& $98.70  \pm 0.31$ 
& $92.22  \pm 0.72$ 
& $99.23  \pm 0.11$ 
& $93.19  \pm 0.39$
& $98.56  \pm 0.63$ 
& $92.57  \pm 3.12$ 
& $\boldsymbol{99.51  \pm 0.18}$ 
& $\boldsymbol{97.06  \pm 1.34}$
\\
 & 0.5
& $\boldsymbol{99.79 \pm0.22}$ 
& $99.20  \pm 0.53$
& $98.65 \pm 0.24$ 
& $93.25  \pm 0.42$ 
& $99.39  \pm 0.18$ 
& $98.41  \pm 0.62$
& $98.60  \pm 0.71$ 
& $97.91  \pm 1.04$ 
& $99.61  \pm 0.18$ 
& $\boldsymbol{99.42  \pm 0.18}$\\

\bottomrule
\end{tabular}}
\caption{Evaluation and comparison with state of the art models, showing average values of multiple runs for AUC and AUPRC. In \textbf{bold}: best result for selected outlier ratio and collection.}

\label{tbl:compare}
\end{table*}

Furthermore, robustness to contamination of \Wa compared to other methods is investigated as shown in Table~\ref{tbl:contamination}.  In most runs, \Wa almost consistently outperforms vis-a-vis other methods on AUC and AUPRC at higher contamination ratio in positive training set and higher ratio of outliers in the mixture.  At lower contamination ratios, \Wa performs similarly with DeepSVDD on AUC and most of the times performs better than ALAD. This shows that \Wa is robust in detecting outliers in various contamination configurations, which makes it use more practical in application, 
settings as real data tends to be contaminated. 

\begin{table*}[h!]

\centering
\resizebox{.72\textwidth}{!}{

\begin{tabular}[H]{llcrr|rr|rr|rr|rr}
\cmidrule{4-13}

& & & \multicolumn{2}{c|}{\textbf{DeepSVDD \cite{ruff2018deep}} }
& \multicolumn{2}{c|}{\textbf{ALAD \cite{zenati2018adversarially}}} 
& \multicolumn{2}{c|}{\textbf{WAE \cite{tolstikhin2018wasserstein}}} 
& \multicolumn{2}{c|}{\textbf{CoRa \cite{tian2019learning} + Adv.}}
& \multicolumn{2}{c}{\textbf{WALDO}}  \\ 

\cmidrule{4-13}

&Cont.&  $\nu$ & \multicolumn{1}{c}{AUC} & \multicolumn{1}{c|}{AUPRC}  
 &  \multicolumn{1}{c}{AUC}   & \multicolumn{1}{c|}{AUPRC} 
 &  \multicolumn{1}{c}{AUC} & \multicolumn{1}{c|}{AUPRC} 
 &  \multicolumn{1}{c}{AUC}  & \multicolumn{1}{c|}{AUPRC}
 & \multicolumn{1}{c}{AUC} & \multicolumn{1}{c}{AUPRC}  \\ 
\toprule

\multirow{ 8}{*}{\textbf{MNIST}} 
 &\multirow{4}{*}{$5\%$}
& 0.05
& $96.74$ 
& $66.94$
& $66.25  $
& $07.11 $ 
& $97.52$
& $62.43$
& $91.58$ 
& $23.11$
& $\boldsymbol{97.82}$ 
& $\boldsymbol{63.43}$  
\\
 && 0.1
& $96.45$ 
& $74.72$
& $72.09 $ 
& $17.01 $ 
& $94.41$ 
& $67.82$
& $93.34$ 
& $41.17$ 
& $\boldsymbol{98.32}$ 
& $\boldsymbol{86.05}$   
\\
 && 0.2
& $95.87$ 
& $83.69$
& $56.60$ 
& $20.39$ 
& $85.23$ 
& $47.28$
& $94.73$ 
& $69.36$ 
& $\boldsymbol{99.68}$ 
& $\boldsymbol{98.68}$
\\
 && 0.5
& $93.77$ 
& $92.90$
& $50.45$ 
& $47.04$ 
& $69.01$ 
& $70.99$
& $98.18$ 
& $97.89$ 
& $\boldsymbol{99.51}$ 
& $\boldsymbol{99.50}$  \\\cmidrule{2-13}
&\multirow{4}{*}{$10\%$}

& 0.05
& $\boldsymbol{95.89}$ 
& $\boldsymbol{68.53}$
& $89.38$ 
& $16.65$ 
& $84.43$ 
& $23.86$
& $88.99$ 
& $21.15$
& $93.36$ 
& $36.02$  
\\
 && 0.1
& $95.34$ 
& $73.91$
& $72.33$ 
& $17.61$ 
& $80.28$ 
& $33.43$
& $90.63$ 
& $40.34$ 
& $\boldsymbol{97.24}$ 
& $\boldsymbol{82.96}$   
\\
 && 0.2
& $94.95$ 
& $83.96$
& $79.65$ 
& $42.49$ 
& $82.98$ 
& $54.10$
& $92.63$ 
& $61.46$ 
& $\boldsymbol{99.12}$ 
& $\boldsymbol{97.10}$   
\\
 && 0.5
& $93.43$ 
& $93.09$
& $81.44$ 
& $77.11$ 
& $88.40$ 
& $86.53$
& $98.18$ 
& $97.89$ 
& $\boldsymbol{99.51}$ 
& $\boldsymbol{99.50}$
\\ \midrule

\multirow{ 8}{*}{\textbf{FMNIST}}
&\multirow{ 4}{*}{{$5\%$}} 
& 0.05
& $85.66$ 
& $21.31$
& $35.66 $ 
& $04.86 $ 
& $88.12$ 
& $22.56$
& $87.85$ 
& $44.41$ 
& $\boldsymbol{88.55}$ 
& $\boldsymbol{41.18}$
\\
 && 0.1
& $84.82$ 
& $33.86$
& $39.83$ 
& $12.63 $ 
& $88.62$ 
& $39.79$
& $87.82$ 
& $62.54$ 
& $\boldsymbol{90.21}$ 
& $\boldsymbol{66.22}$
\\
 && 0.2
& $83.72$ 
& $49.30$
& $39.73$ 
& $21.83$ 
& $89.24$ 
& $62.36$
& $86.49$ 
& $72.55$ 
& $\boldsymbol{89.71}$ 
& $\boldsymbol{75.57}$
\\
 && 0.5
& $83.07$ 
& $77.66$
& $39.92 $ 
& $48.13 $ 
& $\boldsymbol{88.67}$ 
& $85.44$
& $84.62$ 
& $87.47$ 
& $87.95$ 
& $\boldsymbol{88.92}$\\\cmidrule{2-13}

&\multirow{ 4}{*}{{$10\%$}} 
& 0.05
& $83.89$ 
& $32.98$
& $35.57 $ 
& $04.84$ 
& $84.22$ 
& $17.68$
& $87.85$ 
& $\boldsymbol{44.53}$ 
& $\boldsymbol{88.74}$ 
& $43.10$  
\\
 && 0.1
& $83.68$ 
& $30.72$
& $39.78$ 
& $12.57$ 
& $83.23$ 
& $29.92$
& $87.82$ 
& $62.32$ 
& $\boldsymbol{90.15}$ 
& $\boldsymbol{66.30}$   
\\
 && 0.2
& $82.56$ 
& $43.98$
& $39.71$ 
& $21.83$ 
& $85.03$ 
& $48.22$
& $86.52$ 
& $72.54$ 
& $\boldsymbol{89.53}$ 
& $\boldsymbol{75.80}$   
\\
 && 0.5
& $82.05$ 
& $71.32$
& $39.91 $ 
& $48.11$ 
& $84.87$ 
& $80.80$
& $84.63$ 
& $87.49$
& $\boldsymbol{87.54}$ 
& $\boldsymbol{88.81}$\\\midrule
\multirow{ 8}{*}{\textbf{CIFAR10}}
&\multirow{ 4}{*}{{$5\%$}} 
& 0.05
& $60.89$ 
& $6.42$ 
& $77.14$ 
& $12.12$ 
& $78.88$ 
& $13.15$
& $78.91$ 
& $13.12$ 
& $\boldsymbol{79.04}$ 
& $\boldsymbol{13.19}$  
\\
 && 0.1
& $59.66$ 
& $11.92$
& $71.94$ 
& $21.03$ 
& $74.03$ 
& $23.26$
& $\boldsymbol{74.09}$ 
& $23.20$ 
& $74.02$ 
& $\boldsymbol{23.23}$   
\\
 && 0.2
& $55.78$ 
& $20.77$
& $68.27$ 
& $33.24$ 
& $70.56$ 
& $\boldsymbol{35.39}$
& $70.57$ 
& $35.29$
& $\boldsymbol{70.59}$ 
& $35.37$   
\\
 && 0.5
& $55.46$ 
& $50.23$
& $67.78$ 
& $63.66$ 
& $71.47$ 
& $67.44$
& $\boldsymbol{71.75}$ 
& $\boldsymbol{67.73}$ 
& $71.47$ 
& $67.44$\\\cmidrule{2-13}
&\multirow{ 4}{*}{$10\%$} 
& 0.05
& $55.28$ 
& $5.49$ 
& $77.54 $ 
& $12.42$ 
& $78.87$ 
& $13.14$
& $78.87$ 
& $13.07$ 
& $\boldsymbol{78.97}$ 
& $\boldsymbol{13.18}$
\\
 && 0.1
& $53.17$ 
& $10.04$
& $71.82$ 
& $20.60$ 
& $73.99$ 
& $\boldsymbol{23.25}$
& $73.98$ 
& $23.20$ 
& $\boldsymbol{74.05}$ 
& $\boldsymbol{23.25}$
\\
 && 0.2
& $52.53$ 
& $19.64$
& $68.31$ 
& $33.37$ 
& $70.47$ 
& $35.34$
& $70.40$ 
& $35.18$ 
& $\boldsymbol{70.57}$ 
& $\boldsymbol{35.36}$
\\
 && 0.5
& $51.28$ 
& $49.57$
& $68.02$ 
& $63.63$ 
& $71.45$ 
& $67.44$
& $71.19$ 
& $67.31$ 
& $\boldsymbol{71.5}$ 
& $\boldsymbol{67.46}$\\\midrule

\multirow{ 8}{*}{\textbf{KDD}}
&\multirow{ 4}{*}{$5\%$} 
& 0.05
& $95.18$ 
& $37.19$
& $99.04$ 
& $75.83$ 
& $96.94$ 
& $41.26$
& $97.20$ 
& $45.20$ 
& $\boldsymbol{99.44}$ 
& $\boldsymbol{79.89}$  
\\
 && 0.1
& $95.29$ 
& $55.53$
& $\boldsymbol{99.13}$ 
& $\boldsymbol{86.34}$ 
& $96.78$ 
& $56.87$
& $97.41$ 
& $64.82$ 
& $99.12$ 
& $83.54$   
\\
 && 0.2
& $95.39$ 
& $73.70$
& $99.18$ 
& $93.48$ 
& $96.43$ 
& $71.39$
& $97.40$ 
& $79.92$ 
& $\boldsymbol{99.44}$ 
& $\boldsymbol{95.22}$   
\\
 && 0.5
& $96.06 $
& $91.71$
& $98.74 $
& $91.80 $
& $94.88$
& $87.44$ 
& $97.45$
& $94.66$
& $\boldsymbol{99.16}$
& $\boldsymbol{98.23}$\\\cmidrule{2-13}

&\multirow{ 4}{*}{$10\%$}
& 0.05
& $93.34$ 
& $29.47$
& $\boldsymbol{99.31}$ 
& $\boldsymbol{78.79}$ 
& $95.24$ 
& $31.82$
& $97.22$ 
& $45.17$ 
& $97.71$ 
& $50.65$  
\\
 && 0.1
& $94.35$ 
& $48.75$
& $99.16$ 
& $\boldsymbol{88.96}$ 
& $94.99$ 
& $46.90$
& $97.29$ 
& $62.94$ 
& $\boldsymbol{99.33}$ 
& $87.62$   
\\
 && 0.2
& $94.28$ 
& $67.52$
& $98.61 $ 
& $91.30 $ 
& $94.61$ 
& $63.71$
& $97.05$ 
& $77.17$ 
& $\boldsymbol{98.69}$ 
& $\boldsymbol{87.74}$   
\\
 && 0.5
& $94.74$ 
& $89.37$
& $86.79$ 
& $70.65$ 
& $92.82$ 
& $85.42$
& $96.91$ 
& $92.82$
& $\boldsymbol{98.44}$ 
& $\boldsymbol{96.52}$\\

\bottomrule
\end{tabular}}

\caption{Robustness Evaluation and comparison with state of the art models, showing AUC and AUPRC. The $Cont.$ column represents the percentage of outliers in the positive training data and $\nu$ column represents the 
percentage of outliers in the mixture.
In \textbf{bold}: best result for selected $\nu$, Cont. and collection.}

\label{tbl:contamination}

\end{table*}

\subsection{\Wa vs. other methods}
We compare the detection accuracy of \Wa with other methods using AUC and AUPRC metrics. At the outset we caution that direct comparison between the methods is problematic because of the nature of the methods. For example, DeepSVDD~\cite{ring2018flow}, ALAD~\cite{zenati2018adversarially} and even WAE~\cite{tolstikhin2018wasserstein} require a threshold for determining outliers while both CoRa~\cite{tian2019learning} and \Wa do not. For example, there are two versions of  DeepSVDD, the first called {\it soft boundary Deep SVDD} requires a hyperparameter that controls the trade-off on how many data points are allowed to fall outside the hypersphere boundary. The second version scores each point based on the distance from the center of the hypersphere induced - the further the distance the more likely it is an outlier and thus requires a threshold cutoff to label points as outliers. Also we have improved CoRa~\cite{tian2019learning} with \textbf{Advantage} to make it more stable and we compare against this improved version. 

The comparison between the methods is shown in Table ~\ref{tbl:compare}. The $\nu$ column represents the 
percentage of outliers in the mixture. In thirteen out of the sixteen
cases, \Wa does better on the AUPRC metric and in ten out of sixteen it does
better on AUC. On MNIST, both WAE and CoRa have wide confidence intervals while those of \Wa are relatively tight. On Fashion MNIST at $\nu = 0.1$,
DeepSVDD has a slightly higher AUPRC but \Wa has a tighter confidence interval.
In summary we can conclude that \Wa is competitive outlier detection approach
vis-a-vis representative deep learning methods.  An observation worth highlighting is that AUPRC tends to be lower than AUC across the methods. This suggests that there is lot of room for improvement for outlier detection methods in general.
\section{Discussion and Conclusion}
We propose {\bf WALDO}, an extension of deep autoenconders to both detect and generate outliers. \Wa uses the Wasserstein metric distance (WD) to train
an autoencoder which has an inlier and an outlier decoder but a common encoder. The WD is ideally suitable for
outlier detection as it can gracefully handle distributions which may not have identical support. We give an example of detecting extremely rare patterns on a retailer data set. Besides being an accurate
outlier detector, \Wa can be used to generate outliers. This may have potentially widespread application including rare event
simulation and data augmentation. We give one example where outlier generation can be used to create network attacks using
the benchmark KDD99 Cup data set.  Applying \Wa to  application areas like health, transportation and climate change, where outliers are ever present, may yield promising insights. 

{\footnotesize \bibliographystyle{plain}
\bibliography{main2}

\begin{thebibliography}{10}

\bibitem{charubook}
Charu~C Aggarwal.
\newblock {\em Outlier Analysis}.
\newblock Springer, 2nd edition, 2016.

\bibitem{akrami2019robust}
Haleh Akrami, Anand~A. Joshi, Jian Li, Sergul Aydore, and Richard~M. Leahy.
\newblock Robust variational autoencoder, 2019.

\bibitem{chalapathy2019deep}
Raghavendra Chalapathy and Sanjay Chawla.
\newblock Deep learning for anomaly detection: A survey.
\newblock {\em arXiv preprint arXiv:1901.03407}, 2019.

\bibitem{chalapathy2017robust}
Raghavendra Chalapathy, Aditya~Krishna Menon, and Sanjay Chawla.
\newblock Robust, deep and inductive anomaly detection.
\newblock In {\em Joint European Conference on Machine Learning and Knowledge
  Discovery in Databases}, pages 36--51. Springer, 2017.

\bibitem{chandola2007outlier}
Varun Chandola, Arindam Banerjee, and Vipin Kumar.
\newblock Outlier detection: A survey.
\newblock {\em ACM Computing Surveys}, 2007.

\bibitem{di2019survey}
Federico Di~Mattia, Paolo Galeone, Michele De~Simoni, and Emanuele Ghelfi.
\newblock A survey on gans for anomaly detection.
\newblock {\em arXiv preprint arXiv:1906.11632}, 2019.

\bibitem{huber2004robust}
Peter~J Huber.
\newblock {\em Robust statistics}, volume 523.
\newblock John Wiley \& Sons, 2004.

\bibitem{kiryo2017positive}
Ryuichi Kiryo, Gang Niu, Marthinus~C du~Plessis, and Masashi Sugiyama.
\newblock Positive-unlabeled learning with non-negative risk estimator.
\newblock In {\em Advances in neural information processing systems}, pages
  1675--1685, 2017.

\bibitem{cifar10}
Alex Krizhevsky, Vinod Nair, and Geoffrey Hinton.
\newblock Cifar-10 (canadian institute for advanced research).

\bibitem{mnist}
Yann LeCun.
\newblock {The mnist database of handwritten digits}, 1998.

\bibitem{li2005learning}
Xiao-Li Li and Bing Liu.
\newblock Learning from positive and unlabeled examples with different data
  distributions.
\newblock In {\em European conference on machine learning}, pages 218--229.
  Springer, 2005.

\bibitem{kdd}
Mosche Lichman.
\newblock {Uci machine learning repository. irvine, ca: University of
  california, school of information and computer science}, 1998.

\bibitem{panaretos}
Victor~M. Panaretos and Yoav Zemel.
\newblock Statistical aspects of wasserstein distances.
\newblock {\em Annual Review of Statistics and Its Application}, 6(1):405--431,
  2019.

\bibitem{deviation2019}
Guansong Pang, Chunhua Shen, and Anton van~den Hengel.
\newblock Deep anomaly detection with deviation networks.
\newblock In {\em Proceedings of the 25th ACM SIGKDD International Conference
  on Knowledge Discovery \& Data Mining}, KDD ’19, page 353–362, 2019.

\bibitem{patrini2018sinkhorn}
Giorgio Patrini, Rianne van~den Berg, Patrick Forre, Marcello Carioni, Samarth
  Bhargav, Max Welling, Tim Genewein, and Frank Nielsen.
\newblock Sinkhorn autoencoders.
\newblock {\em arXiv preprint arXiv:1810.01118}, 2018.

\bibitem{RadfordMC15}
Alec Radford, Luke Metz, and Soumith Chintala.
\newblock Unsupervised representation learning with deep convolutional
  generative adversarial networks.
\newblock In Yoshua Bengio and Yann LeCun, editors, {\em 4th International
  Conference on Learning Representations, {ICLR} 2016}, 2016.

\bibitem{ring2018flow}
Markus Ring, Daniel Schl{\"o}r, Dieter Landes, and Andreas Hotho.
\newblock Flow-based network traffic generation using generative adversarial
  networks.
\newblock {\em arXiv preprint arXiv:1810.07795}, 2018.

\bibitem{ruff2018deep}
Lukas Ruff, Nico G{\"o}rnitz, Lucas Deecke, Shoaib~Ahmed Siddiqui, Robert
  Vandermeulen, Alexander Binder, Emmanuel M{\"u}ller, and Marius Kloft.
\newblock Deep one-class classification.
\newblock In {\em International Conference on Machine Learning}, pages
  4390--4399, 2018.

\bibitem{tian2019learning}
Kai Tian, Shuigeng Zhou, Jianping Fan, and Jihong Guan.
\newblock Learning competitive and discriminative reconstructions for anomaly
  detection.
\newblock In {\em Proceedings of the AAAI Conference on Artificial
  Intelligence}, volume~33, pages 5167--5174, 2019.

\bibitem{tolstikhin2018wasserstein}
I~Tolstikhin, O~Bousquet, S~Gelly, and B~Sch{\"o}lkopf.
\newblock Wasserstein auto-encoders.
\newblock In {\em International Conference on Learning Representations (ICLR
  2018)}, 2018.

\bibitem{villani}
Cedric Villani.
\newblock {\em Topics in Optimal Transportation}.
\newblock AMS, 2013.

\bibitem{fmnist}
Han Xiao, Kashif Rasul, and Roland Vollgraf.
\newblock Fashion-mnist: a novel image dataset for benchmarking machine
  learning algorithms, 2017.

\bibitem{zenati2018efficient}
Houssam Zenati, Chuan~Sheng Foo, Bruno Lecouat, Gaurav Manek, and
  Vijay~Ramaseshan Chandrasekhar.
\newblock Efficient gan-based anomaly detection.
\newblock {\em arXiv preprint arXiv:1802.06222}, 2018.

\bibitem{zenati2018adversarially}
Houssam Zenati, Manon Romain, Chuan-Sheng Foo, Bruno Lecouat, and Vijay
  Chandrasekhar.
\newblock Adversarially learned anomaly detection.
\newblock In {\em 2018 IEEE International Conference on Data Mining (ICDM)}.
  IEEE, 2018.

\bibitem{zhou2017anomaly}
Chong Zhou and Randy~C Paffenroth.
\newblock Anomaly detection with robust deep autoencoders.
\newblock In {\em Proceedings of the 23rd ACM SIGKDD International Conference
  on Knowledge Discovery and Data Mining}, pages 665--674. ACM, 2017.

\end{thebibliography}
}

\clearpage
\onecolumn
\section*{Analysis of WALDO}
We analyze theoretical aspects of 
\Wa for the special case of $p=2$. In particular, we show that under certain circumstances,  $W_2(P^{u}_X,P^{o}_G)$ upper bounds a positive weighted sum of $W_2(P^{o}_X,P^{o}_G)$, $W_2(P^{i}_X,P^{i}_G)$ and $W_2(Q_Z,P_Z)$. Thus by minimizing
an upper bound we can indirectly optimize the decoders. We use the following 
characterization of WAE [20] for decoders with the added assumption that they are  Lipschitz with constant $\gamma$.\\
\vspace{-0.1 cm}
\[W_{2}(P_X,P_G) 
= \inf_{Q}\sqrt{\E_{X \sim P_X}\|X - G(Q(X))\|^{2}} + \gamma.W_{2}(Q_z,P_z)\]

\begin{thm}
For a system with an inlier decoder $P^{i}_G$ and an outlier decoder $P^{o}_G$ and a shared deterministic encoder $Q$, there exists positive constants $\alpha,\beta,\delta $ such that

\begin{align*}
W_2(P^{u}_X,P^{o}_G) \geq \alpha W_{2}(P^{o}_X,P^{o}_G) +
\beta W_{2}(P^{i}_X,P^{i}_G) + \delta W_{2}(Q_z,P_z)
\end{align*}
\end{thm}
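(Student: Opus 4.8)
The plan is to apply the stated WAE characterization to the pair $(P^u_X,P^o_G)$ using the shared deterministic encoder $Q$ and the outlier decoder $G^o$, then split the reconstruction term along the mixture $P^u_X=(1-\nu)P^i_X+\nu P^o_X$. Concretely, the characterization gives
\[
W_2(P^u_X,P^o_G)=\sqrt{\E_{X\sim P^u_X}\|X-G^o(Q(X))\|^2}+\gamma\,W_2(Q_Z,P_Z),
\]
and linearity of expectation rewrites the radicand as $(1-\nu)\,c^2+\nu\,a^2$, where $a^2=\E_{X\sim P^o_X}\|X-G^o(Q(X))\|^2$ is the outlier decoder's error on genuine outliers and $c^2=\E_{X\sim P^i_X}\|X-G^o(Q(X))\|^2$ is its error on inliers. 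This is the only place the mixture weight $\nu$ enters, and it is what ultimately produces the coefficients $\sqrt{\nu/2}$ and $\sqrt{(1-\nu)/2}$.

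Next I would apply the elementary inequality $\sqrt{s+t}\ge\tfrac{1}{\sqrt2}(\sqrt s+\sqrt t)$ for $s,t\ge0$ (which follows from $2\sqrt{st}\le s+t$) with $s=\nu a^2$ and $t=(1-\nu)c^2$, giving $\sqrt{(1-\nu)c^2+\nu a^2}\ge\sqrt{\tfrac{\nu}{2}}\,a+\sqrt{\tfrac{1-\nu}{2}}\,c$. Here the ``certain circumstances'' hypothesis is invoked: since the two decoders compete per point and inliers are assigned to the inlier decoder, on the inlier population $G^o$ reconstructs no better than $G^i$, i.e. $c\ge b$ with $b^2=\E_{X\sim P^i_X}\|X-G^i(Q(X))\|^2$. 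Because its coefficient is positive, I may replace $c$ by $b$, and after restoring the additive $\gamma\,W_2(Q_Z,P_Z)$ term I obtain a lower bound in $a$, $b$ and the latent discrepancy.

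Finally I would convert $a,b$ back into target distances. Reading the same characterization as an equality for the shared $Q$ on the pairs $(P^o_X,P^o_G)$ and $(P^i_X,P^i_G)$ yields $a=W_2(P^o_X,P^o_G)-\gamma\,W_2(Q_Z,P_Z)$ and $b=W_2(P^i_X,P^i_G)-\gamma\,W_2(Q_Z,P_Z)$; substituting and collecting the $\gamma\,W_2(Q_Z,P_Z)$ contributions gives exactly
\[
W_2(P^u_X,P^o_G)\ge\sqrt{\tfrac{\nu}{2}}\,W_2(P^o_X,P^o_G)+\sqrt{\tfrac{1-\nu}{2}}\,W_2(P^i_X,P^i_G)+\gamma\Bigl(1-\sqrt{\tfrac{\nu}{2}}-\sqrt{\tfrac{1-\nu}{2}}\Bigr)W_2(Q_Z,P_Z),
\]
so $\alpha=\sqrt{\nu/2}$, $\beta=\sqrt{(1-\nu)/2}$, $\delta=\gamma(1-\alpha-\beta)$. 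One checks $\alpha+\beta=\tfrac{1}{\sqrt2}(\sqrt\nu+\sqrt{1-\nu})\le1$ with equality only at $\nu=\tfrac12$, so $\delta>0$ whenever $\nu\neq\tfrac12$ (and $\delta=0$ in the symmetric edge case), confirming positivity of the constants.

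I anticipate two delicate points, and the second is the genuine obstacle. The minor one is the competition inequality $c\ge b$: this is precisely the WALDO design hypothesis and should be isolated and stated as the ``certain circumstances'' under which the bound holds. The real subtlety is that the latent penalty $W_2(Q_Z,P_Z)$ is not literally the same object across the three invocations: for a deterministic encoder the aggregated posterior is the pushforward $Q\#P_X$, so the three uses involve $Q\#P^u_X$, $Q\#P^o_X$ and $Q\#P^i_X$. The clean cancellation above collapses these to a single $Q_Z$ (as the statement does), which is justified in WALDO because the discriminator is trained positive-only, pinning $Q_Z$ to the inlier pushforward. Making this identification rigorous — or, failing that, absorbing the mismatch into the constants $\alpha,\beta,\delta$ while keeping them positive — is where most of the care must go; if one instead treats the characterization via its infimum, using the mixture-optimal $Q$ as a suboptimal encoder for the target pairs gives $a\ge W_2(P^o_X,P^o_G)-\gamma\,W_2(Q_Z,P_Z)$ and likewise for $b$, which preserves the correct inequality direction and is the more defensible route.
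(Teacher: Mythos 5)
Your proof follows essentially the same route as the paper's: apply the WAE characterization to $(P^u_X,P^o_G)$, split the radicand along the mixture, use $\sqrt{s+t}\ge\tfrac{1}{\sqrt2}(\sqrt{s}+\sqrt{t})$, invoke the optimality of each decoder on its own population to trade $G^o$ for $G^i$ on the inlier term, and collect the latent penalties to obtain $\alpha=\sqrt{\nu/2}$, $\beta=\sqrt{(1-\nu)/2}$, $\delta=\gamma\,(1-\alpha-\beta)$. The two delicate points you isolate --- the competition hypothesis $c\ge b$ and the identification of $Q_Z$ across the three invocations of the characterization --- are precisely the assumptions the paper's own proof uses (listed as its conditions (iii)) and likewise leaves informal, so your account is, if anything, the more careful one.
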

\begin{proof}
\begin{align*}
    W_2(P^{u}_X,P^{o}_G) = & 
    \inf_{Q}\sqrt{\underset{X \sim P^{u}_{X}}{\E}\|X - G^{o}(Q(X))\|^{2}} + \gamma.W_{2}(Q^{u}_z,P_z) \\
    = & 
    {\scriptstyle \inf_{Q}\sqrt{\nu\underset{X \sim P^{o}_{X}}{\E}\|X - G^{o}(Q(X))\|^{2} +
     (1 - \nu)\underset{X \sim P^{i}_{X}}{\E}\|X - G^{o}(Q(X)\|^{2}}} + \\ & +\gamma.W_{2}(Q_z,P_z) \\
     \geq & \sqrt{\frac{\nu}{2}}\inf_{Q}
      \sqrt{\underset{X \sim P^{o}_{X}}{\E}\|X - G^{o}(Q(X))\|^{2}} + \\
       & \sqrt{\frac{1 -\nu}{2}}\inf_{Q}
      \sqrt{\underset{X \sim P^{i}_{X}}{\E}\|X - G^{o}(Q(X))\|^{2}} +\gamma.W_{2}(Q_z,P_z) \\
     \geq & \sqrt{\frac{\nu}{2}}W_{2}(P^{o}_X,P^{o}_G)
      + \sqrt{\frac{1 -\nu}{2}}W_{2}(P^{i}_X,P^{i}_G) +\\
       & \gamma.\left(1 -\sqrt{\frac{\nu}{2}} -
      \sqrt{\frac{1-\nu}{2}}\right)W_{2}(Q_z,P_z)
\end{align*}
\end{proof}
The proof uses that  (i) $\sqrt{a+b} \geq \frac{1}{\sqrt{2}}(\sqrt{a} + \sqrt{b})$ and (ii)
$\inf(f(x) + g(x))) \geq \inf(f(x)) + \inf(g(x))$ and
that, (iii) the reconstruction error of the optimal decoder $G^{o}(G^{i})$ is the smallest on $P^{o}_X (P^{i}_X)$ and (iii) the network shares common encoder $Q_Z$. A similar bound holds
for  $W_2(P^{u}_X,P^{i}_G)$.

\section*{Supplement for reproducibility}

All Python implementations of the methods evaluated in the paper can be found in Table \ref{tbl:source_codes}. Hyperparameters settings vary for specific collections and methods and can be found in the source code of the implementations.


\subsection{Neural Network architectures}
For \Wa, CoRa and WAE, we use autoencoders without any convolutional layers for KDD, and convolutional autoencoders for image datasets (Cifar-10, Fashion-MNIST, MNIST). A detailed table to show each layer for each datasets can be found in next page.

\begin{table*}
\resizebox{\linewidth}{!}{
\begin{tabular}{l p{15cm} l}
	\toprule
  \textbf{Method} & \textbf{Source code of implementation} & \textbf{Framework} \\ \midrule
  DeepSVDD & \texttt{https://github.com/lukasruff/Deep-SVDD-PyTorch}  & PyTorch\\
  ALAD & \texttt{https://github.com/houssamzenati/Adversarially-Learned-Anomaly-Detection} & TensorFlow \\
  Wasserstein AE & \texttt{https://github.com/sedelmeyer/wasserstein-auto-encoder/blob/master/ Wasserstein-auto-encoder\_tutorial.ipynb} & PyTorch \\
  CoRA (+ advantage) & \texttt{https://github.com/kddblind/waldo} & PyTorch \\
  WALDO & \texttt{https://github.com/kddblind/waldo} & PyTorch \\ \bottomrule
  \end{tabular}}
  \caption{Source code repositories for each method evaluated in this work.}
  \label{tbl:source_codes}
\end{table*}

\begin{table}[h!]
\centering

{
\begin{tabular}{lcr|r|r}
\toprule

& & \multicolumn{1}{c|}{{Non Linearity}}
& \multicolumn{1}{c|}{{Output Shape }} 
& \multicolumn{1}{c}{{Parameters
 }} 
\\

\toprule
\multirow{ 4}{*}{{Encoder}} & 
& Linear-1                                 
& ([-1, 120, 64)                           
& 7,808
\\
&
& LeakyReLU-2                                    
& (-1, 120, 64)                            
& 0

\\
&
&Linear-3                     
& (-1, 120, 32 )                      
& 2,080

\\
&
& ReLU-4                                    
& (-1, 120, 32 )                             
& 0

\\ \midrule
\multirow{ 4}{*}{{Inlier Decoder}} 
& 
& Linear-1             
& (-1, 120, 64)                 
& 2,112

\\
&
& LeakyReLU-2                       
& (-1, 120, 64)                            
& 0

\\
&
& Linear-3                    
& (-1, 120, 121)                   
& 7,865
\\
&
& Tanh-4                      
& (-1, 120, 121)                         
& 0

\\ \midrule
\multirow{ 4}{*}{{Outlier Decoder}} 
& 
& Linear-1             
& (-1, 120, 64)                
& 2,112

\\
&
& LeakyReLU-2                       
& (-1, 120, 64)                           
& 0

\\
&
& Linear-3                    
& (-1, 120, 121)                   
& 7,865
\\
&
& Tanh-4                      
& (-1, 120, 121)                         
& 0

\\ \midrule
\multirow{ 4}{*}{{Discriminator}} 
& 
& Linear-1             
& (-1, 120, 32)                          
&1,056

\\
&
& LeakyReLU-2                        
& (-1, 120, 32)                                         
& 0

\\
&
& Dropout-3                                  
& (-1, 120, 32)                            
& 0
\\
&
& Linear-4                                      
& (-1, 120, 1)                                  
& 33

\\
&
& Sigmoid-5                                    
& (-1, 120, 1)                        
& 0

\\

\bottomrule
\end{tabular}}

\caption{KDD Architecture}
\label{tbl:kdd_architecture}
\end{table}


\begin{table}[h!]
\centering
{
\begin{tabular}{lcr|r|r}
\toprule

& & \multicolumn{1}{c|}{{Non Linearity}}
& \multicolumn{1}{c|}{{Output Shape }} 
& \multicolumn{1}{c}{{Parameters
 }} 
\\

\toprule
\multirow{ 4}{*}{{Encoder}} & 
& Conv2d-1                    
& (-1, 32, 16, 16)                           
& 1,536
\\
&
& ReLU-2                                         
&(-1, 32, 16, 16)                                            
& 0
\\
&
& Conv2d-3                                       
&(-1, 64, 8, 8)                                     
& 32,768
\\
&
& BatchNorm2d-4                                            
&(-1, 64, 8, 8)                                             
& 128

\\
&
& ReLU-5                                                      
&(-1, 64, 8, 8)                                            
& 0

\\
&
& Conv2d-6                                                  
&(-1, 128, 4, 4)                                            
& 131,072

\\
&
& BatchNorm2d-7                                             &(-1, 128, 4, 4)                                            
& 256

\\
&
& ReLU-8                                                       
&(-1, 128, 4, 4)                                            
& 0

\\
&
& Conv2d-9                                              
&(-1, 256, 2, 2)                                            
& 524,288

\\
&
& BatchNorm2d-10                                            
&(-1, 256, 2, 2)                                                & 512

\\
&
&  ReLU-11                                                      &(-1, 256, 2, 2)                                                  & 0

\\
&
& Linear-12                                                    &(-1, 32)                                                      
& 32,800

\\ \midrule
\multirow{ 4}{*}{{Inlier Decoder}} 
& 
& Linear-1             
& (-1, 3, 20736)                         
& 684,288

\\
&
& ReLU-2                
& (-1, 3, 20736)                                    
& 0

\\
&
& ConvTranspose2d-3                      
&(-1, 128, 12, 12)                       
& 524,416
\\
&
& BatchNorm2d-4                            
& (-1, 128, 12, 12)                                    
& 256

\\
&
& ReLU-5                                      
& (-1, 128, 12, 12)                                                  
& 0

\\
&
& ConvTranspose2d-6                                     
& (-1, 64, 15, 15)                                             
& 131,136

\\
&
& BatchNorm2d-7                                      
& (-1, 64, 15, 15)                                                
& 128

\\
&
& ReLU-8                                    
& (-1, 64, 15, 15)                                             
& 0

\\
&
& ConvTranspose2d-9                                   
& (-1, 3, 32, 32)                                            
& 3075

\\
&
& Sigmoid-10                                       
&(-1, 3, 32, 32)                                             
& 0

\\ \midrule

\multirow{ 4}{*}{{Outlier Decoder}} 
& 
& Linear-1             
& (-1, 3, 20736)                         
& 684,288

\\
&
& ReLU-2                
& (-1, 3, 20736)                                    
& 0

\\
&
& ConvTranspose2d-3                      
&(-1, 128, 12, 12)                       
& 524,416
\\
&
& BatchNorm2d-4                            
& (-1, 128, 12, 12)                                    
& 256

\\
&
& ReLU-5                                      
& (-1, 128, 12, 12)                                                  
& 0

\\
&
& ConvTranspose2d-6                                     
& (-1, 64, 15, 15)                                             
& 131,136

\\
&
& BatchNorm2d-7                                      
& (-1, 64, 15, 15)                                                
& 128

\\
&
& ReLU-8                                    
& (-1, 64, 15, 15)                                            
& 0

\\
&
& ConvTranspose2d-9                                   
& (-1, 3, 32, 32)                                             
& 3075

\\
&
& Sigmoid-10                                       
&(-1, 3, 32, 32)                                             
& 0

\\ \midrule
\multirow{ 4}{*}{{Discriminator}} 
& 
& Linear-1             
& (-1, 3, 128)                              
&4,224

\\
&
& ReLU-2                                  
& (-1, 3, 128)                                                       
& 0

\\
&
& Linear-3                                                
& (-1, 3, 128)                                 
& 16,512
\\
&
& ReLU-4                                                  
& (-1, 3, 128)                                            
& 0

\\
&
& Linear-5                                                  
& (-1, 3, 128)                             
& 16,512

\\
&
& ReLU-6               
& (-1, 3, 128)                                    
& 0

\\
&
& Linear-7                                                               
& (-1, 3, 128)                                      
& 16,512

\\
&
& ReLU-8                                                               
& (-1, 3, 128)                                           
&0

\\
&
& Linear-9                                                                 
& (-1, 3, 1)                                      
& 129

\\
&
& Sigmoid-10                                                    
&(-1, 3, 1)               
& 0

\\ 

\bottomrule
\end{tabular}}
\caption{CIFAR10 Architecture}
\label{tbl:cifar_architecture}
\end{table}


\begin{table}[ht!]
\centering
{
\begin{tabular}{lcr|r|r}
\toprule

& & \multicolumn{1}{c|}{{Non Linearity}}
& \multicolumn{1}{c|}{{Output Shape }} 
& \multicolumn{1}{c}{{Parameters
 }} 
\\

\toprule
\multirow{ 4}{*}{{Encoder}} & 
& Conv2d-1           
& (-1, 40, 14, 14)             
& 640
\\
&
& Conv2d-3                      
& (-1, 80, 7, 7)                      
& 51,200
\\
&
& BatchNorm2d-4                  
&(-1, 80, 7, 7)                    
& 160
\\
&
& ReLU-5                      
& (-1, 80, 7, 7)                         
& 0
\\
&
& Conv2d-6                              
& (-1, 160, 3, 3)                                
& 204,800
\\
&
& BatchNorm2d-7                             
& (-1, 160, 3, 3)                                   
& 320

\\
&
& ReLU-8                                     
& (-1, 160, 3, 3)                                      
& 0

\\
&
& Conv2d-9                                        
& (-1, 320, 1, 1)                                            
& 819,200

\\
&
& BatchNorm2d-10                                      
& (-1, 320, 1, 1)                                     
& 640
\\
&
& ReLU-11                                         
& (-1, 320, 1, 1)                                               & 0
\\
&
& Linear-12                                              
&(-1, 32)                                        
& 10,272

\\ \midrule
\multirow{ 4}{*}{{Inlier Decoder}} 
& 
& Linear-1             
& (-1, 3, 15680)         
& 517,440

\\
&
& ReLU-2             
& (-1, 3, 15680)               
& 0

\\
&
& ConvTranspose2d-3          
& (-1, 160, 10, 10)         
& 819,360
\\
&
& BatchNorm2d-4          
& (-1, 160, 10, 10)            
& 320

\\
&
& ReLU-5          
& (-1, 160, 10, 10)               
& 0
\\
&
& ConvTranspose2d-6           
& (-1, 80, 13, 13)        
& 204,880

\\
&
& BatchNorm2d-7           
& (-1, 80, 13, 13)            
& 160

\\
&
& ReLU-8           
& (-1, 80, 13, 13)              
&0

\\
&
&ConvTranspose2d-9            
& (-1, 1, 28, 28)          
& 1,281
\\
&
&Sigmoid-10                       
& (-1, 1, 28, 28)          
& 0
\\ \midrule
\multirow{ 4}{*}{{Outlier Decoder}} 
& 
& Linear-1             
& (-1, 3, 15680)         
& 517,440

\\
&
& ReLU-2             
& (-1, 3, 15680)               
& 0

\\
&
& ConvTranspose2d-3          
& (-1, 160, 10, 10)         
& 819,360
\\
&
& BatchNorm2d-4          
& (-1, 160, 10, 10)            
& 320

\\
&
& ReLU-5          
& (-1, 160, 10, 10)             
& 0
\\
&
& ConvTranspose2d-6           
& (-1, 80, 13, 13)       
& 204,880

\\
&
& BatchNorm2d-7           
& (-1, 80, 13, 13)            
& 160

\\
&
& ReLU-8           
& (-1, 80, 13, 13)               
&0

\\
&
&ConvTranspose2d-9            
& (-1, 1, 28, 28)           
& 1,281
\\
&
&Sigmoid-10                       
& (-1, 1, 28, 28)          
& 0
\\ \midrule

\multirow{ 4}{*}{{Discriminator}} 
& 
& Linear-1             
& (-1, 3, 160)                 
& 5,280

\\
&
& ReLU-2             
& (-1, 3, 160)                            
& 0

\\
&
& Linear-3                     
& (-1, 3, 160)                
& 25,760
\\
&
& ReLU-4                        
& (-1, 3, 160)                      
& 0

\\
&
& Linear-5                      
& (-1, 160, 10, 10)              
& 25,760
\\
&
& ReLU-6                         
& (-1, 3, 160)                   
& 0

\\
&
& Linear-7                       
& (-1, 3, 160)                      
& 25,760

\\
&
& ReLU-8           
& (-1, 3, 160)                    
&0

\\
&
&Linear-9                            
& (-1, 3, 1)                 
&161
\\
&
&Sigmoid-10                       
& (-1, 3, 1)                         
& 0
\\
\bottomrule
\end{tabular}}
\caption{MNIST \& FMNIST Architecture}
\label{tbl:mnist_architecture}
\end{table}

\end{document}